\def\figref#1{Fig.~\ref{#1}}
\def\secref#1{Sec.~\ref{#1}}
\def\appref#1{Appendix~\ref{#1}}
\def\tabref#1{Table~\ref{#1}}
\def\eqref#1{equation~\ref{#1}}
\def\eqnref#1{Eqn.~\ref{#1}}
\def\algref#1{Algorithm~\ref{#1}}
\def\1{\bm{1}}
\def\vtheta{{\bm{\theta}}}
\def\vc{{\bm{c}}}
\def\vu{{\bm{u}}}
\def\vv{{\bm{v}}}
\def\vx{{\bm{x}}}
\def\vy{{\bm{y}}}
\def\vz{{\bm{z}}}
\DeclareMathAlphabet{\mathsfit}{\encodingdefault}{\sfdefault}{m}{sl}
\SetMathAlphabet{\mathsfit}{bold}{\encodingdefault}{\sfdefault}{bx}{n}
\def\gC{{\mathcal{C}}}
\def\gL{{\mathcal{L}}}
\def\gY{{\mathcal{Y}}}
\newcommand{\dash}{\text{-}}
\renewcommand{\mid}{|}
\renewcommand{\tilde}{\widetilde}
\renewcommand{\hat}{\widehat}
\renewcommand{\frac}{\tfrac}
\DeclareMathOperator*{\argmin}{arg\,min}
\definecolor{darkblue}{rgb}{0.0,0.0,0.65}
\title{Energy-Based Concept Bottleneck Models: Unifying Prediction, Concept Intervention, and Probabilistic Interpretations}
\author{Xinyue Xu\textsuperscript{\rm1}, Yi Qin\textsuperscript{\rm1}, Lu Mi\textsuperscript{\rm2}, Hao Wang\textsuperscript{\rm3}\textsuperscript{\rm\dag}, Xiaomeng Li\textsuperscript{\rm1}\textsuperscript{\rm\dag} \\
\textsuperscript{\rm1}The Hong Kong University of Science and Technology, \textsuperscript{\rm2}University of Washington,
\\\textsuperscript{\rm3}Rutgers University, \textsuperscript{\rm\dag}Equal advising \\
\texttt{\{xxucb, yqinar, eexmli\}@ust.hk}, \texttt{milu@uw.edu}, \texttt{hw488@cs.rutgers.edu}
}
\begin{document}

\newtheorem{theorem}{Theorem}[section] 
\newtheorem{definition}[theorem]{Definition} 
\newtheorem{lemma}{Lemma} 
\newtheorem{corollary}[theorem]{Corollary}
\newtheorem{example}{Example}[section]
\newtheorem{proposition}[theorem]{Proposition}

\maketitle

\begin{abstract}


Existing methods, such as concept bottleneck models (CBMs), have been successful in providing concept-based interpretations for black-box deep learning models. They typically work by predicting concepts given the input and then predicting the final class label given the predicted concepts. However, (1) they often fail to capture the high-order, nonlinear interaction between concepts, e.g., correcting a predicted concept (e.g., ``yellow breast'') does not help correct highly correlated concepts (e.g., ``yellow belly''), leading to suboptimal final accuracy; (2) they cannot naturally quantify the complex conditional dependencies between different concepts and class labels (e.g., for an image with the class label ``Kentucky Warbler'' and a concept ``black bill'', what is the probability that the model correctly predicts another concept ``black crown''), therefore failing to provide deeper insight into how a black-box model works. 
In response to these limitations, we propose \textbf{Energy-based Concept Bottleneck Models (ECBMs)}. Our ECBMs use a set of neural networks to define the joint energy of candidate (input, concept, class) tuples. With such a unified interface, prediction, concept correction, and conditional dependency quantification are then represented as conditional probabilities, which are generated by composing different energy functions. 
Our ECBMs address both limitations of existing CBMs, providing higher accuracy and richer concept interpretations. Empirical results show that our approach outperforms the state-of-the-art on real-world datasets.

\end{abstract}

\section{Introduction}

Black-box models, while powerful, are often unable to explain their predictions in a way that is comprehensible to humans~\citep{rudin2019stop}. Concept-based models aim to address this limitation. Unlike traditional end-to-end models ~\citep{zhang2021progressive} predicting output directly from input, concept-based models first predict intermediate concepts from input and then predict the final class labels from the predicted concepts~\citep{koh2020concept, kazhdan2020now}. These models aim to emulate humans' cognitive process of distinguishing between different objects (e.g., zoologists classifying birds according to their heads, wings, and tails) by generating concepts that are visually comprehensible to humans as intermediate interpretations for their predictions. 


Concept Bottleneck Models (CBMs)~\citep{koh2020concept}, as a representative class of models, operate by firstly generating concepts given the input and then using these concepts to predict the final label. 
The vanilla CBMs often fall short in final prediction accuracy compared to black-box models, creating a potentially unnecessary performance-interpretability trade-off~\citep{rudin2022interpretable}. 
To improve on such trade-off, 
Concept Embedding Models (CEMs)~\citep{zarlenga2022concept} improve CBMs by including positive and negative semantics, while Post-hoc Concept Bottleneck Models (PCBMs)~\citep{yuksekgonul2022post} make use of residual fitting to compensate for limitations in concept learning. 
Despite recent advances, existing CBM variants (including CEMs and PCBMs) still suffer from the following key limitations: 


\begin{enumerate}[nosep,leftmargin=28pt]
    \item \textbf{Interpretability:} 
    They cannot effectively quantify the intricate relationships between various concepts and class labels (for example, in an image labeled ``Kentucky Warbler'', what is the likelihood that the model accurately identifies the concept ``black crown''). As a result, they fall short of offering deeper understanding into the workings of a black-box model.
    \item \textbf{Intervention:} 
    They often struggle to account for the complex interactions among concepts. Consequently, intervening to correct a misidentified concept (e.g., ``yellow breast'') does not necessarily improve the accuracy of closely related concepts (e.g., ``yellow belly''). This limitation results in suboptimal accuracy for both individual concepts and the final class label.
    \item \textbf{Performance:} Current CBM variants suffer from a trade-off \citep{zarlenga2022concept} between model performance and interpretability. However, an ideal interpretable model should harness the synergy between performance and interpretability to get the best of both worlds. 
\end{enumerate}


In response to these limitations, we propose \textbf{Energy-based Concept Bottleneck Models (ECBMs)}. Our ECBMs use a set of neural networks to define the joint energy of 
the input $\vx$, concept $\vc$, and class label $\vy$. 
With such a unified interface, (1) prediction of the class label $\vy$, (2) prediction of concepts $\vc_{\dash k}$ (i.e., all concepts except for $c_k$) after correcting concept $c_k$ for input $\vx$, and (3) conditional interpretation among class label $\vy$, concept $c_k$, and another concept $c_{k'}$ can all be naturally represented as conditional probabilities $p(\vy \mid \vx)$, $p(\vc_{\dash  k} \mid \vx,c_{k})$, and $p(c_k \mid \vy,c_{k'})$, respectively; these probabilities are then easily computed by composing different energy functions. 


We summarize our contributions as follows: 
\begin{itemize}[nosep,leftmargin=28pt]
    \item Beyond typical concept-based prediction, we identify the problems of concept correction and conditional interpretation as valuable tools to provide concept-based interpretations. 
    \item We propose {Energy-based Concept Bottleneck Models (ECBMs)}, the first general method to 
    unify concept-based prediction, concept correction, and conditional interpretation as conditional probabilities under a joint energy formulation. 
    \item With ECBM's unified interface, we derive a set of algorithms to compute different conditional probabilities by composing different energy functions. 
    \item Empirical results show that our ECBMs significantly outperform the state-of-the-art on real-world datasets. Code is available at \href{https://github.com/xmed-lab/ECBM}{https://github.com/xmed-lab/ECBM}.
\end{itemize}

\section{Related Work}
\textbf{Concept Bottleneck Models} (CBMs)~\citep{koh2020concept,kumar2009attribute,lampert2009learning} 
use 
a feature extractor and a concept predictor to generate the ``bottleneck'' concepts, which are fed into a predictor to predict the final class labels. 
Concept Embedding Models (CEMs)~\citep{zarlenga2022concept} build on CBMs to characterize each concept through a pair of positive and negative concept embeddings. 
Post-hoc Concept Bottleneck Models (PCBMs)~\citep{yuksekgonul2022post} 
use a post-hoc explanation model with additional residual fitting 
to further improve final accuracy. 
Probabilistic Concept Bottleneck Models (ProbCBMs)~\citep{kim2023probabilistic} incorporate probabilistic embeddings to enable uncertainty estimation of concept prediction. 
There are a diverse set of CBM variants \citep{barbiero2023interpretable,barbiero2022entropy,havasi2022addressing,ghosh2023dividing,ghosh2023distilling,yang2023language,sarkar2022framework,oikarinen2023label}, each addressing problems from their unique perspectives. This diversity underscores the vitality of research within this field.

Here we note several key differences between the methods above and our ECBMs. 
(1) These approaches are inadequate at accounting for the complex, nonlinear interplay among concepts. For example, correcting a mispredicted concept does not necessarily improve the accuracy of related concepts, leading suboptimal final accuracy. 
(2) They cannot effectively quantify the complex conditional dependencies { (detailed explanations in Appendix~\ref{sec_app:complex})} between different concepts and class labels, therefore failing to offer conditional interpretation on how a black-box model works. 
In contrast, our ECBMs address these limitations by defining the joint energy of candidate (input, concept, class) tuples and unify both concept correction and conditional interpretation as conditional probabilities, which are generated by composing different energy functions.


\textbf{Energy-Based Models} 
\citep{lecun2006tutorial, tu2020engine, deng2020residual, nijkamp2020anatomy} 
leverage Boltzmann distributions to decide the likelihood of input samples, mapping each sample to a scalar energy value through an energy function. 
{The development of energy-based models have been signficantly influenced by pioneering works such as \citep{xie2016theory} and \citep{xie2018cooperative}.}
Beyond classification~\citep{li2022energy, grathwohl2019your}, energy-based models have also been applied to structured prediction tasks~\citep{belanger2016structured, rooshenas2019search, tu2019benchmarking}. 
{ 
\citeauthor{xie2019cooperative} and \citeauthor{du2020compositional} use energy-based models for the distribution of data and labels, which also capture concepts.}
These methods {use energy functions} to improve prediction performance, but cannot provide concept-based interpretations. In contrast, our ECBMs estimate the joint energy of input, concepts, and class labels, thereby naturally providing comprehensive concept-based interpretations that align well with human intuition. 

\textbf{Unsupervised Concept-Based Models}, unlike CBMs, aim to extract concepts without concept annotations. 
This is achieved by introducing inductive bias based on Bayesian deep learning with probabilistic graphical models~\citep{BIN,BDL,BDLSurvey,CounTS,VDI}, causal structure~\citep{OrphicX}, clustering structure~\citep{chen2019looks,ma2023looks}, generative models \citep{du2021unsupervised, liu2023unsupervised} or interpretability desiderata~\citep{SENN}.

\section{Energy-Based Concept Bottleneck Models}\label{sec:ECBM}

In this section, we introduce the notation, problem settings, and then our proposed ECBMs in detail.

\textbf{Notation.} We consider a supervised classification setting with $N$ data points, $K$ concepts, and $M$ classes, namely $\mathcal{D} = {(\vx^{(j)}, \vc^{(j)}, \vy^{(j)}) }^{N}_{j=1}$, where the $j$-th data point consists of the input $\vx^{(j)} \in \mathcal{X}$, the label $\vy^{(j)} \in \mathcal{Y} \subset \{0,1\}^{M}$, and the concept $\vc^{(j)} \in \mathcal{C} = \{0,1\}^{K}$; note that $\mathcal{Y}$ is the space of $M$-dimensional one-hot vectors while $\gC$ is not. 
We denote as $\vy_{m}\in \mathcal{Y}$ the $M$-dimensional one-hot vector with the $m$-th dimension set to $1$, where $m \in \{1,\ldots,M\}$. $c_{k}^{(j)}$ denotes the $k$-th dimension of the concept vector $\vc^{(j)}$, where $k \in \{1,\ldots,K\}$. We denote $[c_i^{(j)}]_{i\neq k}$ as $\vc^{(j)}_{\dash k}$ for brevity. 
A pretrained backbone neural network $F : \mathcal{X} \rightarrow \mathcal{Z}$ is used to extract the features $\vz \in \mathcal{Z}$ from the input $\vx \in \mathcal{X}$. Finally, the structured energy network $E_{\vtheta}(\cdot, \cdot)$ parameterized by $\vtheta$, maps the $(\vx, \vy)$, $(\vx, \vc)$, or $(\vc, \vy)$ to real-valued scalar energy values. We omit the superscript~$ ^{(j)}$ when the context is clear.

\textbf{Problem Settings.} For each data point, we consider three problem settings:
\begin{enumerate}[nosep,leftmargin=28pt]
\item \textbf{Prediction ($p(\vc,\vy|\vx)$).} This is the typical setting for concept-based models; given the input $\vx$, the goal is to predict the class label $\vy$ and the associated concepts $\vc$ to interpret the predicted class label. Note that CBMs decompose $p(\vc,\vy|\vx)$ to predict $p(\vc|\vx)$ and then $p(\vy|\vc)$.
\item \textbf{Concept Correction/Intervention (e.g., $p(\vc_{\dash k}|\vx, c_k)$).} Given the input $\vx$ and a corrected concept $c_k$, predict all the other concepts $\vc_{\dash k}$. 
\item \textbf{Conditional Interpretations~\citep{BIN} 
(e.g., $p(\vc|\vy)$ or $p(c_k|\vy, c_{k'})$).} 
Interpret the model using 
\emph{conditional probabilities} such as $p(c_k|\vy, c_{k'})$ (i.e., given an image with class label $\vy$ and concept $c_{k'}$, what is the probability that the model correctly predicts concept $c_k$). 
\end{enumerate}

\begin{figure}[!t]
\vspace{-10mm}
    \centering
    \includegraphics[width=1.0\textwidth]{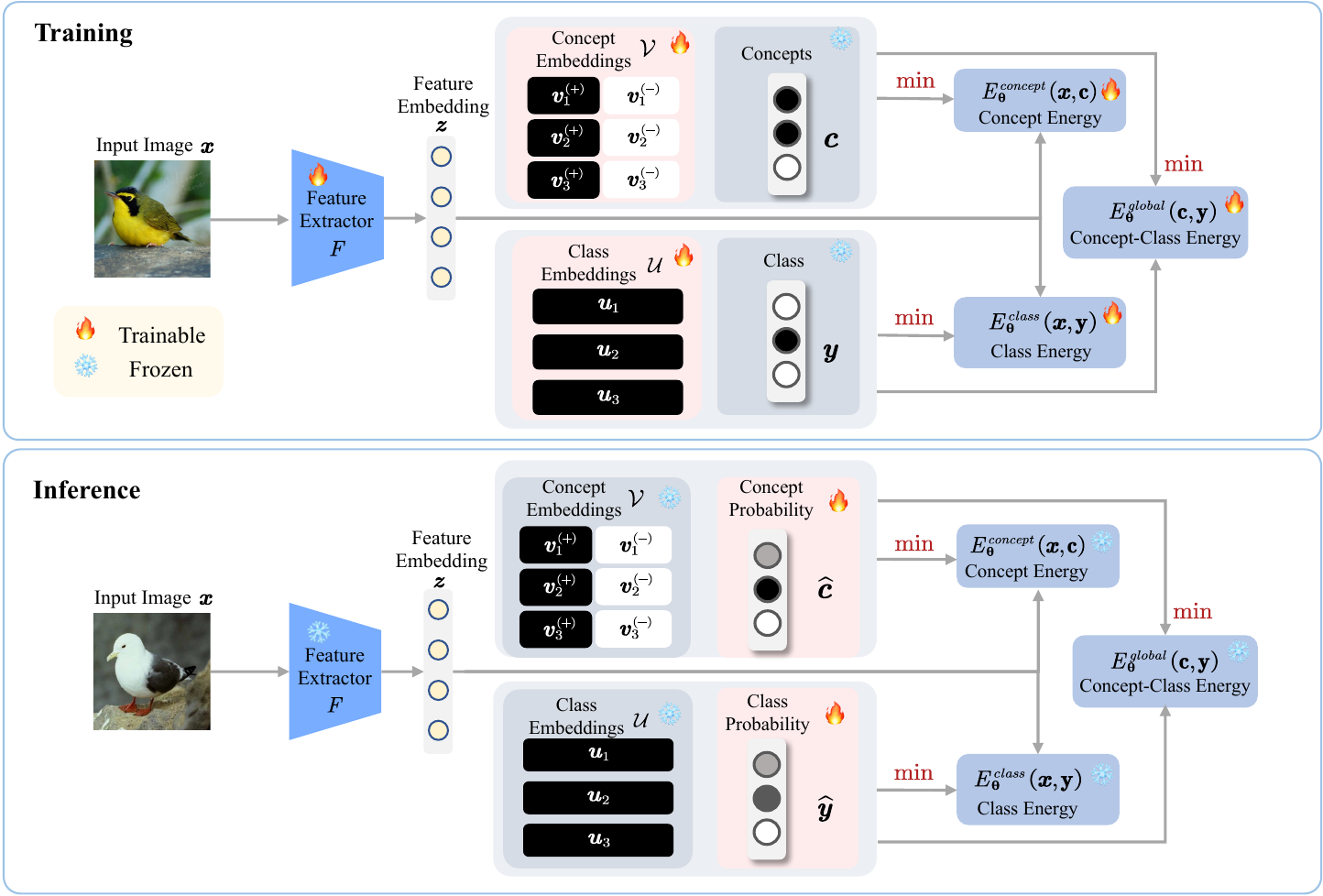}
    \caption{Overview of our ECBM. \textbf{Top:} During training, ECBM learns positive concept embeddings $\vv_k^{(+)}$ (in black), negative concept embeddings $\vv_k^{(-)}$ (in white), the class embeddings $\vu_m$ (in black), and the three energy networks by minimizing the three energy functions, $E_{\vtheta}^{class}(\vx, \vy)$, $E_{\vtheta}^{concept}(\vx, \vc)$, and $E_{\vtheta}^{global}(\vc, \vy)$ using \eqnref{eq:loss_total}. 
    The concept $\vc$ and class label $\vy$ are treated as constants. 
    \textbf{Bottom:} During inference, we (1) freeze all concept and class embeddings as well as all networks, and (2) update the predicted concept probabilities $\widehat{\vc}$ and class probabilities $\widehat{\vy}$ by minimizing the three energy functions using \eqnref{eq:loss_total}. 
    }
    \label{fig:model-illustration}

\end{figure}

\subsection{Structured Energy-Based Concept Bottleneck Models}
\textbf{Overview.} 
Our ECBM consists of three energy networks collectively parameterized by $\vtheta$: (1) a class energy network $E_{\vtheta}^{class}(\vx, \vy)$ that measures the compatibility of input $\vx$ and class label $\vy$, (2) a concept energy network $E_{\vtheta}^{concept}(\vx, \vc)$ that measures the compatibility of input $\vx$ and the $K$ concepts $\vc$, and (3) a global energy network $E_{\vtheta}^{global}(\vc, \vy)$ that measures the compatability of the $K$ concepts $\vc$ and class label $\vy$. 
The \emph{class} and \emph{concept} energy networks model \emph{class labels} and \emph{concepts} separately; in contrast, the \emph{global} energy network model the \emph{global relation} between class labels and concepts. 
For all three energy networks, \emph{lower energy} indicates \emph{better compatibility}. ECBM is trained by minimizing the following total loss function: 
\begin{align}
\label{eq:loss_total}
\mathcal{L}_{total}^{all} &= \mathbb{E}_{(\vx, \vc,\vy) \sim p_{\mathcal{D}}(\vx,\vc,\vy)}[\gL_{total}(\vx, \vc,\vy)]\\
     \mathcal{L}_{total}(\vx,\vc,\vy) &= \mathcal{L}_{class}(\vx,\vy) + \lambda_{c} \mathcal{L}_{concept}(\vx,\vc) + \lambda_{g} \mathcal{L}_{global}(\vc,\vy),
\end{align}
where $\mathcal{L}_{class}$, $\mathcal{L}_{concept}$, and $\mathcal{L}_{global}$ denote the loss for training the three energy networks $E_{\vtheta}^{class}(\vx, \vy)$, $E_{\vtheta}^{concept}(\vx, \vc)$, and $E_{\vtheta}^{global}(\vc, \vy)$, respectively. $\lambda_{c}$ and $\lambda_{g}$ are hyperparameters. Fig.~\ref{fig:model-illustration} shows an overview of our ECBM. 
Below we discuss the three loss terms (\eqnref{eq:loss_total}) in detail.

\textbf{Class Energy Network $E_{\vtheta}^{class}(\vx,\vy)$.} 
In our ECBM, each class $m$ is associated with a trainable class embedding denoted as $\vu_{m}$. 
As shown in~\figref{fig:model-illustration}(top), given the input $\vx$ and a candidate label $\vy$, the feature extractor $F$ first compute the features $\vz = F(\vx)$. We then feed $\vy$'s associated class label embedding $\vu$ along with the features $\vz$ into a neural network $G_{zu}(\vz, \vu)$ to obtain the final $E_{\vtheta}^{class}(\vx,\vy)$. Formmaly we have,
\begin{align}
E_{\vtheta}^{class}(\vx,\vy) = G_{zu}(\vz, \vu),
\end{align}
where $G_{zu}(\cdot,\cdot)$ is a trainable neural network. 
To train the class energy network, we use the Boltzmann distribution to define the conditional likelihood of $\vy$ given input $\vx$:
\begin{equation}
    \label{eq:p_xy}
    p_{\vtheta}(\vy \mid \vx)=\frac{\exp \left(-E_{\vtheta}^{class}(\vx, \vy)\right)}{\sum_{m=1}^M \exp \left(-E_{\vtheta}^{class}\left(\vx, \vy_m\right)\right)},
\end{equation}
where the denominator serves as a normalizing constant. $\vy_{m}\in \mathcal{Y}$ a one-hot vector with the $m$-th dimension set to $1$. The class energy network $E_{\vtheta}^{class}(\vx, \vy)$ is parameterized by $\vtheta$; it maps the input-class pair $(\vx, \vy)$ to a real-valued scalar energy. 
Our ECBM uses the negative log-likelihood as the loss function; for an input-class pair $(\vx, \vy)$: 
\begin{align}
\label{eq:elog_xy}
\gL_{class}(\vx, \vy)
=-\log p_{\vtheta}(\vy \mid \vx)
=E_{\vtheta}^{class}(\vx, \vy)+\log \Big(\sum\nolimits_{m=1}^M e^{-E_{\vtheta}^{class}\left(\vx, \vy_m \right)}\Big).
\end{align}
%
%
%
\textbf{Concept Energy Network $E_{\vtheta}^{concept}(\vx, \vc)$.} Our concept energy network $E_{\vtheta}^{concept}(\vx, \vc)$ consists of $K$ sub-networks, $E_{\vtheta}^{concept}(\vx, c_k)$ where $k\in\{1,\dots,K\}$. Each sub-network $E_{\vtheta}^{concept}(\vx, c_k)$ measures the compatibility of the input $\vx$ and the $k$-th concept $c_k\in\{0,1\}$. Each concept $k$ is associated with a positive embedding $\vv_{k}^{(+)}$ and a negative embedding $\vv_{k}^{(-)}$. 
We define the $k$-th concept embedding $\vv_{k}$ as a combination of positive and negative embeddings, weighted by the concept probability $c_{k}$, i.e., $\vv_{k} = c_{k} \cdot \vv_{k}^{(+)} + (1 -c_{k}) \cdot \vv_{k}^{(-)}$. 
As shown in~\figref{fig:model-illustration}(top), given the input $\vx$ and an concept $c_k$, the feature extractor $F$ first compute the features $\vz = F(\vx)$. We then feed $c_k$'s associated concept embedding ($\vv_k^{(+)})$ if $c_k=1$ and ($\vv_k^{(-)})$ if $c_k=0$ along with the features $\vz$ into a neural network to obtain the final $E_{\vtheta}^{concept}(\vx,c_k)$. Formally, we have
\begin{align}
E_{\vtheta}^{concept}(\vx,c_k) = G_{zv}(\vz, \vv_k),
\end{align}
where $G_{zv}(\cdot,\cdot)$ is a trainable neural network. 
Similar to the class energy network (\eqnref{eq:elog_xy}), the loss function for training the $k$-th sub-network $E_{\vtheta}^{concept}(\vx, c_k)$ is
\begin{align}
\label{eq:local_xc}
    \mathcal{L}^{(k)}_{concept}(\vx, c_k) =  E_{\vtheta}^{concept}(\vx, c_{k})+\log \Big(\sum\nolimits_{c_{k} \in \{0,1\}} e^{-E_{\vtheta}^{concept} \left(\vx, c_{k}\right)}\Big).
\end{align}
Therefore, for each input-concept pair $(\vx,\vc)$, the loss function for training $E_{\vtheta}^{concept}(\vx, \vc)$ is
\begin{align}
\label{eq:sum_loss_xc}
    \gL_{concept}(\vx, \vc) = \sum\nolimits_{k=1}^{K}\gL_{concept}^{(k)}(\vx, c_k).
\end{align}
\textbf{Global Energy Network $E_{\vtheta}^{global}(\vc, \vy)$.} 
The class energy network learns the dependency between the input and the class label, while the concept energy network learns the dependency between the input and each concept separately. 
In contrast, our global energy network learns (1) the interaction between different concepts and (2) the interaction between all concepts and the class label. 

Given the class label $\vy$ and the concepts $\vc=[c_k]_{k=1}^K$, we will feed $\vy$'s associated class label embedding $\vu$ along with $\vc$'s associated $K$ concept embeddings $[\vv_k]_{k=1}^K$ ($\vv_k=\vv_k^{(+)}$) if $c_k=1$ and ($\vv_k=\vv_k^{(-)}$) if $c_k=0$ into a neural network to compute the global energy $E_{\vtheta}^{global}(\vc, \vy)$. Formally, we have
\begin{align}
E_{\vtheta}^{global}(\vc,\vy) = G_{vu}([\vv_k]_{k=1}^K, \vu),
\end{align}
where $G_{vu}(\cdot,\cdot)$ is a trainable neural network. {$[\vv_k]_{k=1}^{k}$ denotes the concatenation of all concept embeddings.}
For each concept-class pair $(\vc,\vy)$, the loss function for training $E_{\vtheta}^{global}(\vc, \vy)$ is
\begin{align}
\label{eq:global_cy}
    \mathcal{L}_{global}(\vc, \vy) = E_{\vtheta}^{global}(\vc, \vy) + \log \Big(\sum\nolimits_{\substack{m=1,\vc^{\prime} \in \mathcal{C}}}^{M} e^{-E_{\vtheta}^{global} \left(\vc^{\prime}, \vy_m\right)}\Big),
\end{align}
where $\vc'$ enumerates all concept combinations in the space $\gC$. In practice, we employ a negative sampling strategy to enumerate a subset of possible combinations for computational efficiency. 

\textbf{Inference Phase.} 
After training ECBM using \eqnref{eq:loss_total}, we can obtain the feature extractor $F$ and energy network parameters $\vtheta$ (including class embeddings $[\vu_m]_{m=1}^M$, concept embeddings $[\vv_k]_{k=1}^K$, as well as the parameters of neural networks $G_{zu}(\cdot,\cdot)$, $G_{zv}(\cdot,\cdot)$, and $G_{vu}(\cdot,\cdot)$). During inference, we will freeze all parameters $F$ and $\vtheta$ to perform (1) prediction of concepts and class labels (\secref{sec:prediction}), (2) concept correction/intervention (\secref{sec:intervention}), and (3) conditional interpretations (\secref{sec:interpretation}). Below we provide details on these three inference problems. 


\subsection{Prediction}\label{sec:prediction}
To predict $\vc$ and $\vy$ given the input $\vx$, we freeze the feature extractor $F$ and the energy network parameters $\vtheta$ and search for the optimal prediction of concepts $\hat{\vc}$ and the class label $\hat{\vy}$ as follows:
\begin{align}
\argmin\nolimits_{\widehat{\vc},\widehat{\vy}}~~\gL_{class}({\vx},\widehat{\vy}) 
+ \lambda_c \gL_{concept}({\vx},\widehat{\vc}) 
+ \lambda_g \gL_{global}(\widehat{\vc},\widehat{\vy}), \label{eq:predict_L}
\end{align}
where $\gL_{class}(\cdot,\cdot)$, $\gL_{concept}(\cdot,\cdot)$, and $\gL_{global}(\cdot,\cdot)$ are the instance-level loss functions in~\eqnref{eq:elog_xy}, \eqnref{eq:sum_loss_xc}, and \eqnref{eq:global_cy}, respectively. Since the second term of these three loss functions remain constant during inference, one only needs to minimize the joint energy below: 
\begin{align}
E_{\vtheta}^{joint}({\vx},{\vc},{\vy}) \triangleq 
E_{\vtheta}^{class}({\vx},{\vy}) 
+ \lambda_c E_{\vtheta}^{concept}({\vx},{\vc}) 
+ \lambda_g E_{\vtheta}^{global}({\vc},{\vy}).  \label{eq:joint_E}
\end{align}
Therefore \eqnref{eq:predict_L} is simplified to 
$
\argmin\nolimits_{\widehat{\vc},\widehat{\vy}}~~E_{\vtheta}^{joint}({\vx},\widehat{\vc},\hat{\vy}). 
$
To make the optimization tractable, we relax the support of $\hat{\vc}$ from $\{0,1\}^K$ to $[0,1]^K$; similarly we relax the support of $\hat{\vy}$ from $\gY \subset \{0,1\}^M$ to $[0,1]^M$ (with the constraint that all entries of $\hat{\vy}$ sum up to $1$). We use backpropagation to search for the optimal $\hat{\vc}$ and $\hat{\vy}$. 
After obtaining the optimal $\hat{\vc}$ and $\hat{\vy}$, we round them back to the binary vector space $\{0,1\}^K$ and the one-hot vector space $\gY$ as the final prediction. More details are provided in \algref{alg:gradient} of {Appendix~\ref{sec_app:imp}}. { Comprehensive details about the hyperparameters used in this work can be found in Appendix~\ref{sec_app:hyper}. Additionally, we present an ablation study that analyzes hyperparameter sensitivity in \tabref{tab:hyper_tuning} of Appendix~\ref{sec_app:abl}.}




\subsection{Concept Intervention and Correction}\label{sec:intervention}
Similar to most concept-based models, our ECBMs also supports test-time intervention. Specifically, after an ECBM predicts the concepts $\vc$ and class label $\vy$, practitioners can examine $\vc$ and $\vy$ to intervene on some of the concepts (e.g., correcting an incorrectly predicted concept). However, existing concept-based models do not capture the interaction between concepts; therefore correcting a concept does not help correct highly correlated concepts, leading to suboptimal concept and class accuracy. In contrast, our ECBMs are able to propagate the corrected concept(s) to other correlated concepts, thereby improving both concept and class accuracy. Proposition~\ref{prop:jointmissing} below shows how our ECBMs automatically correct correlated concepts after test-time intervention and then leverage all corrected concepts to further improve final classification accuracy. 



\begin{restatable}[\textbf{Joint Missing Concept and Class Probability}]{proposition}{jointmissing}
\label{prop:jointmissing}
Given the ground-truth values of concepts $[c_{k}]_{k=1}^{K-s}$, the joint probability of the remaining concepts  {$[c_{k}]_{k=K-s+1}^{K}$} and the class label $\vy$ can be computed as follows:
\begin{align}
    p([c_{k}]_{k=K-s+1}^{K},\vy \mid \vx, [c_{k}]_{k=1}^{K-s}) 
    = 
    \frac{ e^{-E^{joint}_{\vtheta}(\vx,\vc,\vy)} }
    {\sum_{m=1}^M\sum_{[c_k]_{K-s+1}^{K}\in\{0,1\}^s} ( e^{-E^{joint}_{\vtheta}(\vx,\vc,\vy_m)}) }, 
\end{align}
where $E^{joint}_{\vtheta}(\vx,\vc,\vy)$ is the joint energy defined in~\eqnref{eq:joint_E}.
\end{restatable}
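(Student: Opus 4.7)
The plan is to treat the joint energy $E^{joint}_{\vtheta}(\vx,\vc,\vy)$ introduced in~\eqnref{eq:joint_E} as inducing a Gibbs distribution over $(\vc,\vy)$ conditioned on $\vx$, and then to obtain the claimed formula by a direct application of the definition of conditional probability plus marginalization. Concretely, in analogy with~\eqnref{eq:p_xy}, I define
\begin{align*}
p_{\vtheta}(\vc,\vy \mid \vx) \;=\; \frac{e^{-E^{joint}_{\vtheta}(\vx,\vc,\vy)}}{Z(\vx)}, \qquad Z(\vx) \;=\; \sum_{m=1}^{M}\sum_{\vc'\in\{0,1\}^{K}} e^{-E^{joint}_{\vtheta}(\vx,\vc',\vy_m)}.
\end{align*}
This is the natural joint model consistent with the three Boltzmann-form conditionals whose negative log-likelihoods make up $\mathcal{L}_{total}$ in~\eqnref{eq:loss_total}; I would state this identification as the starting point so that the proposition reduces to a purely probabilistic manipulation.

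Next, I would partition the concept vector as $\vc = (\vc_{obs},\vc_{mis})$ with $\vc_{obs}=[c_k]_{k=1}^{K-s}$ and $\vc_{mis}=[c_k]_{k=K-s+1}^{K}$. By the definition of conditional probability,
\begin{align*}
p_{\vtheta}(\vc_{mis},\vy \mid \vx, \vc_{obs}) \;=\; \frac{p_{\vtheta}(\vc_{obs},\vc_{mis},\vy \mid \vx)}{p_{\vtheta}(\vc_{obs}\mid \vx)} \;=\; \frac{p_{\vtheta}(\vc,\vy\mid \vx)}{\sum_{m=1}^{M}\sum_{\vc_{mis}\in\{0,1\}^{s}} p_{\vtheta}(\vc,\vy_m\mid \vx)}.
\end{align*}
Plugging in the Gibbs form above, the partition function $Z(\vx)$ cancels between numerator and denominator, leaving exactly the expression in the proposition. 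The denominator becomes $\sum_{m=1}^{M}\sum_{[c_k]_{K-s+1}^{K}\in\{0,1\}^{s}} e^{-E^{joint}_{\vtheta}(\vx,\vc,\vy_m)}$ as required, since $\vc_{obs}$ is held fixed at its given values while we sum only over $\vc_{mis}$ and $m$.

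The main conceptual step, and the only place where anything substantive needs to be said, is the first one: justifying that the joint conditional $p_{\vtheta}(\vc,\vy\mid\vx)$ is given by the Boltzmann distribution with energy $E^{joint}_{\vtheta}$ and partition function $Z(\vx)$. Everything afterwards is a one-line manipulation of conditional probabilities and cancellation of normalizers. I would therefore devote most of the written proof to making that modeling choice explicit (pointing back to the analogous construction in~\eqnref{eq:p_xy}, \eqnref{eq:local_xc}, and~\eqnref{eq:global_cy}), and then carry out the marginalization in two short displayed equations. No technical obstacle beyond this identification is expected, since the concept and class spaces are finite so all sums are well-defined and exchanges of summation are trivially valid.
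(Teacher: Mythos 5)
Your proposal is correct and follows essentially the same route as the paper's own proof: both rest on the modeling identification that the joint energy $E^{joint}_{\vtheta}$ induces a Boltzmann distribution over $(\vc,\vy)$ given $\vx$, and then obtain the stated formula by conditioning on the observed concepts and normalizing over the missing concepts and the class label. Your version is merely more explicit about the partition function $Z(\vx)$ cancelling, which the paper's two-line proof leaves implicit.
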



\subsection{Conditional Interpretations}\label{sec:interpretation}

ECBMs are capable of providing a range of conditional probabilities that effectively quantify the complex conditional dependencies between different concepts and class labels. These probabilities can be represented by energy levels. For example, Proposition~\ref{prop:marginalclass} below computes $p(c_k|\vy)$ to interpret the importance of the concept $c_k$ to a specific class label $\vy$ in an ECBM.

\begin{restatable}[\textbf{Marginal Class-Specific Concept Importance}]{proposition}{marginalclass}
\label{prop:marginalclass}
    Given the target class $\vy$, the marginal concept importance (significance of each individual concept) can be expressed as:
    \begin{equation}
\begin{aligned}
     p(c_{k} \mid \vy) 
     & \propto \sum\nolimits_{\vc_{\dash k}} \frac{\sum_{\vx} \left( \frac{e^{ -E_{\vtheta}^{global}(\vc, \vy)}}{\sum_{m=1}^M E_{\vtheta}^{global}\left(\vc, \vy_m\right)} \right) \cdot (e^{-\sum_{k^{\prime}=1}^K E^{concept}_{\vtheta}(\vx, c_{k^{\prime}})}) \cdot p(\vx)}{\sum_{\vx} e^{-E^{class}_{\vtheta}(\vx, \vy)} \cdot p(\vx)},
\end{aligned} 
\end{equation}
\end{restatable}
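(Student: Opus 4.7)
The plan is to derive the expression by starting from $p(c_k\mid\vy)$, marginalizing over the complementary concepts $\vc_{\dash k}$ and the input $\vx$, and then substituting the Boltzmann forms induced by the three ECBM energy networks. Concretely, I would write
\begin{align*}
p(c_k\mid\vy) \;=\; \sum\nolimits_{\vc_{\dash k}} p(\vc\mid\vy) \;=\; \sum\nolimits_{\vc_{\dash k}} \frac{p(\vc,\vy)}{p(\vy)} \;=\; \sum\nolimits_{\vc_{\dash k}} \frac{\sum_{\vx} p(\vx)\,p(\vc,\vy\mid\vx)}{p(\vy)},
\end{align*}
and then decompose the joint conditional as $p(\vc,\vy\mid\vx)=p(\vy\mid\vc,\vx)\,p(\vc\mid\vx)$. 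Under the ECBM modelling structure, the global energy network $E_{\vtheta}^{global}(\vc,\vy)$ is defined to capture the $(\vc,\vy)$ interaction decoupled from $\vx$, so I would invoke the assumption $p(\vy\mid\vc,\vx)=p(\vy\mid\vc)$; similarly the concept sub-networks $E_{\vtheta}^{concept}(\vx,c_k)$ treat the concepts as conditionally independent given $\vx$, allowing $p(\vc\mid\vx)=\prod_{k'=1}^K p(c_{k'}\mid\vx)$.

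Next I would replace each factor by its energy-based form, using the Boltzmann distributions introduced in \eqnref{eq:p_xy} and~\eqnref{eq:local_xc}:
\begin{align*}
p(\vy\mid\vc) &= \frac{e^{-E_{\vtheta}^{global}(\vc,\vy)}}{\sum_{m=1}^M e^{-E_{\vtheta}^{global}(\vc,\vy_m)}}, \\
\prod\nolimits_{k'=1}^K p(c_{k'}\mid\vx) &\propto \exp\Bigl(-\sum\nolimits_{k'=1}^K E_{\vtheta}^{concept}(\vx,c_{k'})\Bigr),
\end{align*}
where the proportionality absorbs the per-concept partition functions, which do not depend on $\vc$. For the denominator I would marginalize $p(\vy)=\sum_{\vx}p(\vx)p(\vy\mid\vx)$ and apply the class Boltzmann form $p(\vy\mid\vx)\propto e^{-E_{\vtheta}^{class}(\vx,\vy)}$, giving $p(\vy)\propto \sum_{\vx} e^{-E_{\vtheta}^{class}(\vx,\vy)}p(\vx)$. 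Substituting all three pieces back and keeping only the factors that depend on $(c_k,\vy)$ yields the claimed expression, modulo an overall constant absorbed by the $\propto$ sign (and a likely typographical absence of $e^{-(\cdot)}$ inside the inner $\sum_{m=1}^M$ of the stated formula).

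The main obstacle I anticipate is not the algebra but justifying the two conditional-independence steps cleanly, since they are structural modelling choices rather than identities: (i) $p(\vy\mid\vc,\vx)=p(\vy\mid\vc)$, which is consistent with how the global energy $E_{\vtheta}^{global}(\vc,\vy)$ is defined to model the $\vc$--$\vy$ relation without $\vx$, and (ii) the factorization of $p(\vc\mid\vx)$ across $k$, which mirrors the additive decomposition in~\eqnref{eq:sum_loss_xc}. I would state both as consequences of the ECBM's factored energy specification and then proceed. A secondary technical point is tracking what lives inside versus outside the $\propto$: the per-$k'$ normalizers of the concept sub-networks depend on $\vx$ but not on $\vc$, so after the sum over $\vc_{\dash k}$ they contribute only a $\vx$-dependent factor that is the same across values of $c_k$ in the marginal and hence can be dropped up to proportionality, which is the step that needs to be stated carefully to avoid confusing $\vx$-dependence with $c_k$-dependence.
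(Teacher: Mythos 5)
Your proposal is correct and follows essentially the same route as the paper: the paper first proves the joint version $p(\vc\mid\vy)\propto(\cdot)$ via Bayes' rule, the expansion $p(\vc)=\sum_{\vx}p(\vc\mid\vx)p(\vx)$, the factorization $p(\vc\mid\vx)=\prod_{k'}p(c_{k'}\mid\vx)$, and substitution of the Boltzmann forms, and then obtains the marginal statement by summing over $\vc_{\dash k}$ exactly as you do. Your variant of writing $p(\vc,\vy)=\sum_{\vx}p(\vc,\vy\mid\vx)p(\vx)$ with the explicit assumption $p(\vy\mid\vc,\vx)=p(\vy\mid\vc)$ is an equivalent bookkeeping of the same structural modelling choices, and your observation about the missing $e^{-(\cdot)}$ in the inner normalizer matches a typo the paper itself carries through its derivation.
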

{where $\mathbf{c}$ represents the full vector of concepts and can be broken down into $[c_k, \mathbf{c}_{-k}]$.}


Proposition~\ref{prop:marginalclass} above interprets the importance of each concept $c_k$ separately. In contrast, Proposition~\ref{prop:jointclass} below computes the joint distribution of all concepts $p(\vc|\vy)$ to identify which combination of concepts $\vc$ best represents a specific class $\vy$. 

\begin{restatable}[\textbf{Joint Class-Specific Concept Importance}]{proposition}{jointclass}
   \label{prop:jointclass}
    Given the target class $\vy$, the joint concept importance (significance of combined concepts) can be computed as:
    \begin{equation}
    \begin{aligned}
     p(\vc \mid \vy) 
     & \propto \frac{\sum_{\vx} \left( \frac{e^{ -E_{\vtheta}^{global}(\vc, \vy)}}{\sum_{m=1}^M E_{\vtheta}^{global}\left(\vc, \vy_m\right)} \right) \cdot ( e^{-\sum_{k=1}^{K}E^{concept}_{\vtheta}(\vx, c_{k})}) \cdot p(\vx)}{\sum_{\vx} e^{-E^{class}_{\vtheta}(\vx, \vy)} \cdot p(\vx)}.
\end{aligned} 
\end{equation}
\end{restatable}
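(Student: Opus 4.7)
The plan is to derive the identity by combining Bayes' rule, marginalization over the input $\vx$, and the Boltzmann-form conditionals induced by the three energy networks. First, I would write $p(\vc \mid \vy) = p(\vc,\vy)/p(\vy)$ and then expand both the numerator and denominator using the law of total probability over $\vx$. Working under the directed factorization $p(\vx,\vc,\vy) = p(\vx)\,p(\vc\mid\vx)\,p(\vy\mid\vc)$ implicit in the bottleneck structure (i.e., the Markov assumption $p(\vy \mid \vc,\vx) = p(\vy\mid\vc)$), the joint collapses to $p(\vc,\vy) = p(\vy\mid\vc)\sum_{\vx} p(\vc\mid\vx)\,p(\vx)$, while a direct application of total probability gives $p(\vy)=\sum_{\vx} p(\vy\mid\vx)\,p(\vx)$.

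Next, I would substitute the three Boltzmann conditionals defined by the energy networks. The global energy gives $p(\vy\mid\vc) = e^{-E_{\vtheta}^{global}(\vc,\vy)}/\sum_{m=1}^{M} e^{-E_{\vtheta}^{global}(\vc,\vy_m)}$, exactly as in \eqnref{eq:p_xy} applied to the pair $(\vc,\vy)$. The concept energy, together with the assumption that the $K$ concept sub-networks factorize as $p(\vc\mid\vx) \propto \prod_{k=1}^{K} e^{-E_{\vtheta}^{concept}(\vx,c_k)} = e^{-\sum_{k=1}^{K} E_{\vtheta}^{concept}(\vx,c_k)}$, supplies the concept factor inside the $\vx$-sum. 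The class energy yields $p(\vy\mid\vx) \propto e^{-E_{\vtheta}^{class}(\vx,\vy)}$, which populates the denominator. Since $p(\vy\mid\vc)$ is independent of $\vx$, it can be pushed inside the $\sum_{\vx}$ in the numerator, producing exactly the stated right-hand side.

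Finally, I would close the argument by observing that the per-$\vx$ normalizers absorbed into the proportionality symbol — namely $\prod_k \sum_{c_k\in\{0,1\}} e^{-E_{\vtheta}^{concept}(\vx,c_k)}$ in the numerator and $\sum_{m=1}^{M} e^{-E_{\vtheta}^{class}(\vx,\vy_m)}$ in the denominator — do not depend on the variable $\vc$ that is being normalized over (with $\vy$ held fixed), so they contribute only an overall constant that is harmless for the claimed proportionality in $\vc$.

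The main obstacle is not any single algebraic step but rather being explicit about what $\propto$ means here and why the nested $\vx$-dependent normalization constants can be absorbed. In particular, one must argue that the factors dropped in passing from equality to proportionality depend only on $\vx$ and $\vy$, never on $\vc$, so that after the marginalization over $\vx$ they factor out of the $\vc$-dependent expression. A second subtle point worth addressing is the Markov assumption $p(\vy\mid\vc,\vx)=p(\vy\mid\vc)$ used to justify the factorization; this is precisely the bottleneck hypothesis under which the global energy is an adequate model of $p(\vy\mid\vc)$, and flagging it explicitly makes clear that the proposition is a statement about the generative semantics assigned to the three energy networks rather than a purely formal manipulation.
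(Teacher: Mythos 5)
Your derivation is essentially the paper's own proof: Bayes' rule, marginalizing $p(\vc)$ over $\vx$, the factorization $p(\vc\mid\vx)=\prod_{k=1}^{K}p(c_k\mid\vx)$, and substitution of the three Boltzmann conditionals, with the denominator handled via $p(\vy)=\sum_{\vx}p(\vy\mid\vx)p(\vx)$. One caveat: your claim that the dropped concept normalizers $\prod_{k}\sum_{c_k\in\{0,1\}}e^{-E^{concept}_{\vtheta}(\vx,c_k)}$ ``factor out'' after the marginalization is not literally correct, since an $\vx$-dependent factor sitting inside $\sum_{\vx}$ cannot be pulled out of the sum --- but the paper's final $\propto$ step makes exactly the same implicit move, so your argument is faithful to (and more explicit about its assumptions than) the one in the paper.
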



ECBMs can also provide interpretation on the probability of a correct concept prediction $c_k$, given the class label and another concept $c_{k^{\prime}}$. This is computed as $p(c_k|c_{k'},\vy)$ using Proposition~\ref{prop:correctconcept} below. This demonstrates our ECBM's capability to reason about additional concepts when we have knowledge of specific labels and concepts.


\begin{restatable}[\textbf{Class-Specific Conditional Probability among Concepts}]{proposition}{correctconcept}
\label{prop:correctconcept}
  Given a concept label $c_{k^{\prime}}$ and the class label $\vy$, the probability of predicting another concept $c_k$ is:  
  \begin{equation*}
  p(c_k \mid c_{k^{\prime}}, \vy) \propto \frac{\sum_{[c_{j}]_{j \neq k, k^{\prime}}^{K} \in \{0,1\}^{K-2}} \sum_{\vx} \left( \frac{e^{ -E_{\vtheta}^{global}(\vc, \vy)}}{\sum_{m=1}^M E_{\vtheta}^{global}\left(\vc, \vy_m\right)} \right) \cdot ( e^{-\sum_{l=1}^{K}E^{concept}_{\vtheta}(\vx, c_{l})}) \cdot p(\vx)}{\sum_{[c_{j}]_{j \neq k}^{K} \in \{0,1\}^{K-1}} \sum_{\vx} \left( \frac{e^{ -E_{\vtheta}^{global}(\vc, \vy)}}{\sum_{m=1}^M E_{\vtheta}^{global}\left(\vc, \vy_m\right)} \right) \cdot ( e^{-\sum_{l=1}^{K}E^{concept}_{\vtheta}(\vx, c_{l})}) \cdot p(\vx)}.
  \end{equation*}
\end{restatable}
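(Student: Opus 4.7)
The plan is to derive Proposition~\ref{prop:correctconcept} directly from Proposition~\ref{prop:jointclass} via the elementary conditional-probability identity
\[
p(c_k \mid c_{k'}, \vy) \;=\; \frac{p(c_k, c_{k'} \mid \vy)}{p(c_{k'} \mid \vy)}.
\]
Both the numerator and the denominator are marginalizations of the same joint distribution $p(\vc \mid \vy)$, for which Proposition~\ref{prop:jointclass} already provides a closed-form expression (up to proportionality) in terms of the three energy networks.

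First I would start from that expression for $p(\vc \mid \vy)$. A key observation is that its ``normalizer'' $\sum_{\vx} e^{-E^{class}_\vtheta(\vx, \vy)}\, p(\vx)$ depends only on $\vy$ and not on $\vc$, so when we take the ratio in the conditional-probability identity this factor cancels and we may work with the $\vc$-dependent portion alone. Next I would obtain $p(c_k, c_{k'} \mid \vy)$ by summing that portion over all $2^{K-2}$ settings of $[c_j]_{j \neq k, k'}$, which immediately produces the numerator in the statement. Similarly, marginalizing over the $K-1$ remaining concepts yields the marginal that sits in the denominator. Dividing the two, the overall proportionality constant and the $\vy$-only normalizer cancel, and the claimed expression follows.

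The main obstacle --- more a bookkeeping hazard than a genuine difficulty --- is the interleaving of the outer $\vx$-summation with the inner sums over concept subsets. Because the factor $e^{-E^{global}_\vtheta(\vc, \vy)}/\sum_{m} E^{global}_\vtheta(\vc, \vy_m)$ depends on $\vc$ while $e^{-\sum_l E^{concept}_\vtheta(\vx, c_l)}$ depends on both $\vx$ and $\vc$, one must keep the $\vc$-indexed quantities \emph{inside} the $\vx$-sum rather than factoring anything out of it; otherwise the cancellations against Proposition~\ref{prop:jointclass} fail to line up. Once this ordering is respected, the remaining manipulations are routine finite-sum algebra.
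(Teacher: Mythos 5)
Your proposal matches the paper's proof essentially step for step: both start from $p(c_k \mid c_{k'}, \vy) = p(c_k, c_{k'} \mid \vy)/p(c_{k'} \mid \vy)$, express numerator and denominator as marginalizations of the $p(\vc \mid \vy)$ expression from Proposition~\ref{prop:jointclass} over $\{0,1\}^{K-2}$ and $\{0,1\}^{K-1}$ respectively, and cancel the $\vy$-only normalizer $\sum_{\vx} e^{-E^{class}_{\vtheta}(\vx,\vy)} p(\vx)$ (together with the proportionality constant) in the ratio. The argument is correct and takes the same route as the paper.
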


Proposition~\ref{prop:conditionalconcepts} computes the conditional probability of one concept given another concept $p(c_k|c_{k'})$, which interprets the interaction (correlation) among concepts in an ECBM. 

\begin{restatable}[\textbf{Class-Agnostic Conditional Probability among Concepts}]{proposition}{conditionalconcepts}
\label{prop:conditionalconcepts}
    Given one concept $c_k$, the conditional probability of another concept $c_{k^{\prime}}$ can be compuated as:
    {\tiny
    \begin{equation*}
    p(c_{k} \mid c_{k^{\prime}}) \propto \frac{\sum_{m=1}^{M} \sum_{[c_{j}]_{j \neq k, k^{\prime}}^{K} \in \{0,1\}^{K-2}} \sum_{\vx} \left( \frac{e^{ -E_{\vtheta}^{global}(\vc, \vy)}}{\sum_{m=1}^M E_{\vtheta}^{global}\left(\vc, \vy_m\right)} \right) \cdot ( e^{-\sum_{l=1}^{K}E^{concept}_{\vtheta}(\vx, c_{l})}) \cdot p(\vx) \cdot p(\vy_m)}{ \sum_{m=1}^{M} \sum_{[c_{j}]_{j \neq k}^{K} \in \{0,1\}^{K-1}} \sum_{\vx} \left( \frac{e^{ -E_{\vtheta}^{global}(\vc, \vy)}}{\sum_{m=1}^M E_{\vtheta}^{global}\left(\vc, \vy_m\right)} \right) 
            \cdot ( e^{-\sum_{l=1}^{K}E^{concept}_{\vtheta}(\vx, c_{l})}) \cdot p(\vx) \cdot p(\vy_m)}.
  \end{equation*} }
\end{restatable}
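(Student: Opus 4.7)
The plan is to derive the formula for $p(c_k \mid c_{k'})$ by combining the definition of conditional probability with a marginalization over the class label $\vy$ and the $K-2$ unnamed concepts, then invoking Proposition~\ref{prop:jointclass} to substitute in the closed-form expression for $p(\vc \mid \vy)$. In other words, I would treat this proposition as a straightforward ``class-agnostic'' consequence of the class-specific result already derived.

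First, I would write the conditional probability as a ratio via Bayes' rule, $p(c_k \mid c_{k'}) = p(c_k, c_{k'}) / p(c_{k'})$. Next, I would expand both numerator and denominator by the law of total probability, marginalizing over the class label with the class prior $p(\vy_m)$ and over the remaining concepts $[c_j]_{j \neq k, k'}$ (in the numerator) or $[c_j]_{j \neq k'}$ (in the denominator):
\begin{align}
p(c_k, c_{k'}) &= \sum\nolimits_{m=1}^{M} \sum\nolimits_{[c_j]_{j\neq k, k'} \in \{0,1\}^{K-2}} p(\vc \mid \vy_m)\, p(\vy_m), \\
p(c_{k'}) &= \sum\nolimits_{m=1}^{M} \sum\nolimits_{[c_j]_{j\neq k'} \in \{0,1\}^{K-1}} p(\vc \mid \vy_m)\, p(\vy_m).
\end{align}
I would then substitute the expression for $p(\vc \mid \vy)$ established in Proposition~\ref{prop:jointclass}, namely the ratio involving $e^{-E_{\vtheta}^{global}(\vc,\vy)} / \sum_{m} E_{\vtheta}^{global}(\vc,\vy_m)$ in the global term, the factor $e^{-\sum_{k} E_{\vtheta}^{concept}(\vx, c_k)}$ in the concept term, and the $\vx$-marginalization with weight $p(\vx)$ and denominator $\sum_{\vx} e^{-E_{\vtheta}^{class}(\vx,\vy)} p(\vx)$.

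After this substitution, the ratio structure in the resulting expression matches the formula in the statement up to the proportionality constant implicit in Proposition~\ref{prop:jointclass}. The final step is to argue that this unknown constant is class-agnostic (or at worst depends only on $\vy_m$ in a way that is absorbed into $p(\vy_m)$), so that it factors out of the $\sum_m$ in both numerator and denominator and cancels under the $\propto$ relation. The main obstacle I anticipate is exactly this bookkeeping of normalization: Proposition~\ref{prop:jointclass} is stated up to proportionality, and one must verify that the hidden constant does not depend on $\vc$ (so that the inner sums over $[c_j]_{j \neq k, k'}$ and $[c_j]_{j \neq k'}$ are legitimate) and that whatever residual $\vy$-dependence it carries can be harmlessly absorbed into the prior factor $p(\vy_m)$ in the stated formula. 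Once that is verified, the claimed proportionality follows immediately by reading off the matched numerator and denominator.
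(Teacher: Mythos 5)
Your proposal follows essentially the same route as the paper's proof: write $p(c_k \mid c_{k^{\prime}}) = p(c_k, c_{k^{\prime}})/p(c_{k^{\prime}})$, expand numerator and denominator by total probability over the class labels with prior $p(\vy_m)$ and over the unobserved concepts, and substitute the expression from Proposition~\ref{prop:jointclass}. Your explicit bookkeeping of the hidden normalization constants (and your marginalizing over $[c_j]_{j\neq k^{\prime}}$ in the denominator) is, if anything, slightly more careful than the paper's derivation, which silently absorbs the $\vy_m$-dependent factor under the proportionality sign.
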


Besides global interpretation above, ECBMs can also provide instance-level interpretation. For example, Proposition~\ref{prop:conditionalclasses} in \appref{app:proof} shows how ECBMs reason about the conditional probability of the class label $\vy$ given the input $\vx$ and a known concept $c_k$. \textbf{More ECBM conditional interpretations} and \textbf{all related proofs} are included in \appref{app:proof}.




\section{Experiments}
In this section, we compare our ECBM with existing methods on real-world datasets. 

\subsection{Experiment Setup}

\textbf{Datasets.} 
We evaluate different methods on three real-world datasets:
\begin{itemize}[nosep,leftmargin=18pt]
\item \textbf{Caltech-UCSD Birds-200-2011 (CUB)}~\citep{wah2011caltech} is a fine-grained bird classification dataset with $11{,}788$ images, $200$ classes and $312$ annotated attributes. Following CBM~\citep{koh2020concept}, ProbCBM~\citep{kim2023probabilistic} and CEM~\citep{zarlenga2022concept}, we select $112$ attributes as the concepts and use the same data splits.
\item \textbf{Animals with Attributes 2 (AWA2)}~\citep{xian2018zero} is a a zero-shot learning dataset containing $37{,}322$ images and $50$ animal classes. We use all $85$ attributes as concepts.
\item \textbf{Large-scale CelebFaces Attributes (CelebA)}~\citep{liu2015deep} contains $200,000$ images, each annotated with $40$ face attributes. Following the setting in CEM~\citep{zarlenga2022concept}, we use the $8$ most balanced attributes as the target concepts and $256$ classes for the classification task. 
\end{itemize}


\textbf{Baselines and Implementation Details.} 
We compare our ECBM with state-of-the-art methods, i.e., concept bottleneck model (\textbf{CBM})~\citep{koh2020concept}, concept embedding model (\textbf{CEM})~\citep{zarlenga2022concept}, post-hoc concept bottleneck model (\textbf{PCBM})~\citep{yuksekgonul2022post}, and probabilistic concept bottleneck model (\textbf{ProbCBM})~\citep{kim2023probabilistic}. We use ResNet101~\citep{he2016deep} as the feature extractor $F$ for all evaluated methods. We use the SGD optimizer during the training process. We use $\lambda_{c} =0.3$ and $\lambda_{g} =0.3$. For the propositions, we have implemented a hard version (yielding $0/1$ output results) for computing probabilities. See~\appref{sec_app:imp} for more details.

\textbf{Evaluation Metrics.} 
With ${\{\vx^{(j)}, \vc^{(j)}, \vy^{(j)}\}}^{N}_{j=1}$ as the dataset, we denote as $\{\widehat{\vc}^{(j)}, \widehat{\vy}^{(j)}\}^{N}_{j=1}$ the model prediction for concepts and class labels. $c_{k}^{(j)}$ and $\hat{c}_{k}^{(j)}$ is the $k$-th dimension of $\vc^{(j)}$ and $\widehat{\vc}^{(j)}$, respectively. We use the following three metrics to evaluate different methods.

\begin{table}[!t]
\centering
\vskip -1.0cm
\caption{Accuracy on Different Datasets. 
{
We report the mean and standard deviation from five runs with different random seeds.
}For ProbCBM (marked with ``*''), we report the best results from the ProbCBM paper~\citep{kim2023probabilistic} for CUB and AWA2 datasets. 
}
\label{tab:ecbm_main}
\vskip -0.3cm
\resizebox{\linewidth}{!}{
\begin{tabular}{c|ccc|ccc|ccc}
\toprule
\diagbox{Model}{Data}                                                                & \multicolumn{3}{c|}{CUB}                                                    & \multicolumn{3}{c|}{CelebA}                                                 & \multicolumn{3}{c}{AWA2}                                                   \\ 
\midrule
Metric & \multicolumn{1}{c|}{Concept} & \multicolumn{1}{c|}{\begin{tabular}[c]{@{}c@{}}Overall \\ Concept\end{tabular}} & Class & \multicolumn{1}{c|}{Concept} & \multicolumn{1}{c|}{\begin{tabular}[c]{@{}c@{}}Overall \\ Concept\end{tabular}} & Class & \multicolumn{1}{c|}{Concept} & \multicolumn{1}{c|}{\begin{tabular}[c]{@{}c@{}}Overall \\ Concept\end{tabular}} & Class \\ 
\midrule
CBM  & 0.964 { $\pm$ 0.002}       & 0.364 { $\pm$ 0.070}   &   0.759 { $\pm$ 0.007}   & 0.837 { $\pm$ 0.009}       & 0.381 { $\pm$ 0.006}               &0.246  { $\pm$ 0.005}     & \textbf{0.979}  { $\pm$ 0.002}      & 0.803 { $\pm$ 0.023}               &0.907   { $\pm$ 0.004}   \\ 
ProbCBM*     & 0.946 { $\pm$ 0.001}       & 0.360 { $\pm$ 0.002}  &   0.718 { $\pm$ 0.005}    & { 0.867} { $\pm$ 0.007}        & { 0.473} { $\pm$ 0.001}      &    { 0.299} { $\pm$ 0.001}  &    0.959 { $\pm$ 0.000}     &    0.719 { $\pm$ 0.001}   &  0.880  { $\pm$ 0.001}     \\ 
PCBM    & -       & - &   0.635 { $\pm$ 0.002}    & -        & -      &    { 0.150 $\pm$ 0.010}   &    -     &    -   &   { 0.862 $\pm$ 0.003}     \\ 
CEM  & 0.965  { $\pm$ 0.002}    & 0.396   { $\pm$ 0.052}     &   0.796 { $\pm$ 0.004}   & 0.867 { $\pm$ 0.001}       & 0.457  { $\pm$ 0.005}              &0.330 { $\pm$ 0.003}      & 0.978 { $\pm$ 0.008}       & 0.796  { $\pm$ 0.011}              &0.908  { $\pm$ 0.002}     \\ 
\textbf{ECBM}    & \textbf{0.973} { $\pm$ 0.001}       & \textbf{0.713} { $\pm$ 0.009}    & \textbf{0.812} { $\pm$ 0.006}       & \textbf{0.876} { $\pm$ 0.000}        &\textbf{0.478} { $\pm$ 0.000}  & \textbf{0.343} { $\pm$ 0.000}     & \textbf{0.979} { $\pm$ 0.000}       & \textbf{0.854} { $\pm$ 0.000}            & \textbf{0.912} { $\pm$ 0.000}       \\ 
\bottomrule
\end{tabular}
}
\vskip -0.3cm
\end{table}

\textbf{Concept Accuracy} evaluates the model's predictions for each concept individually:
\begin{align}
    \mathcal{C}_{acc} =   \nicefrac{\sum_{j=1}^N \sum_{k=1}^K \mathbbm{1}({c}^{(j)}_{k} = \hat{{c}}^{(j)}_{k})}{(KN)}, 
\end{align}
where $\mathbbm{1}(\cdot)$ is the indicator function.

\textbf{Overall Concept Accuracy} evaluates the model's ability to correctly predict \emph{all} concepts for each input $\vx^{(j)}$. Higher overall concept accuracy indicates the model's ability to mine the latent correlation between concepts for a more accurate interpretation for each concepts. It is defined as:
\begin{equation}
    \mathcal{C}_{overall} =  \nicefrac{\sum_{j=1}^N \mathbbm{1}(\vc^{(j)} = \widehat{\vc}^{(j)} )}{N}. 
\end{equation}
\textbf{Class Accuracy} evaluates the model's prediction accuracy for the class label:
\begin{equation}
    \mathcal{A}_{acc} =  \nicefrac{\sum_{j=1}^N \mathbbm{1}(\vy^{(j)} = \widehat{\vy}^{(j)} )}{N}. 
\end{equation}

\subsection{Results}
\textbf{Concept and Class Label Prediction.} 
Table~\ref{tab:ecbm_main} shows different types of accuracy of the evaluated methods. Concept accuracy across various methods is similar, with our ECBM slightly outperforming others. Interestingly, ECBM significantly outperforms other methods in terms of overall concept accuracy, especially in CUB ($71.3\%$ for ECBM versus $39.6\%$ for the best baseline CEM); this shows that ECBM successfully captures the interaction (and correlation) among the concepts, thereby leveraging one correctly predicted concept to help correct other concepts' prediction. Such an advantage also helps improve ECBM's class accuracy upon other methods. 
{ We have conducted an ablation study for each component of our ECBM architecture (including a comparison with traditional black-box models) in \tabref{tab:ecbm_single} of Appendix~\ref{sec_app:abl}, verifying our design's effectiveness.}


\begin{figure}[!tb]
    \centering
    \vskip -1.0cm
    \includegraphics[width=0.9\textwidth]{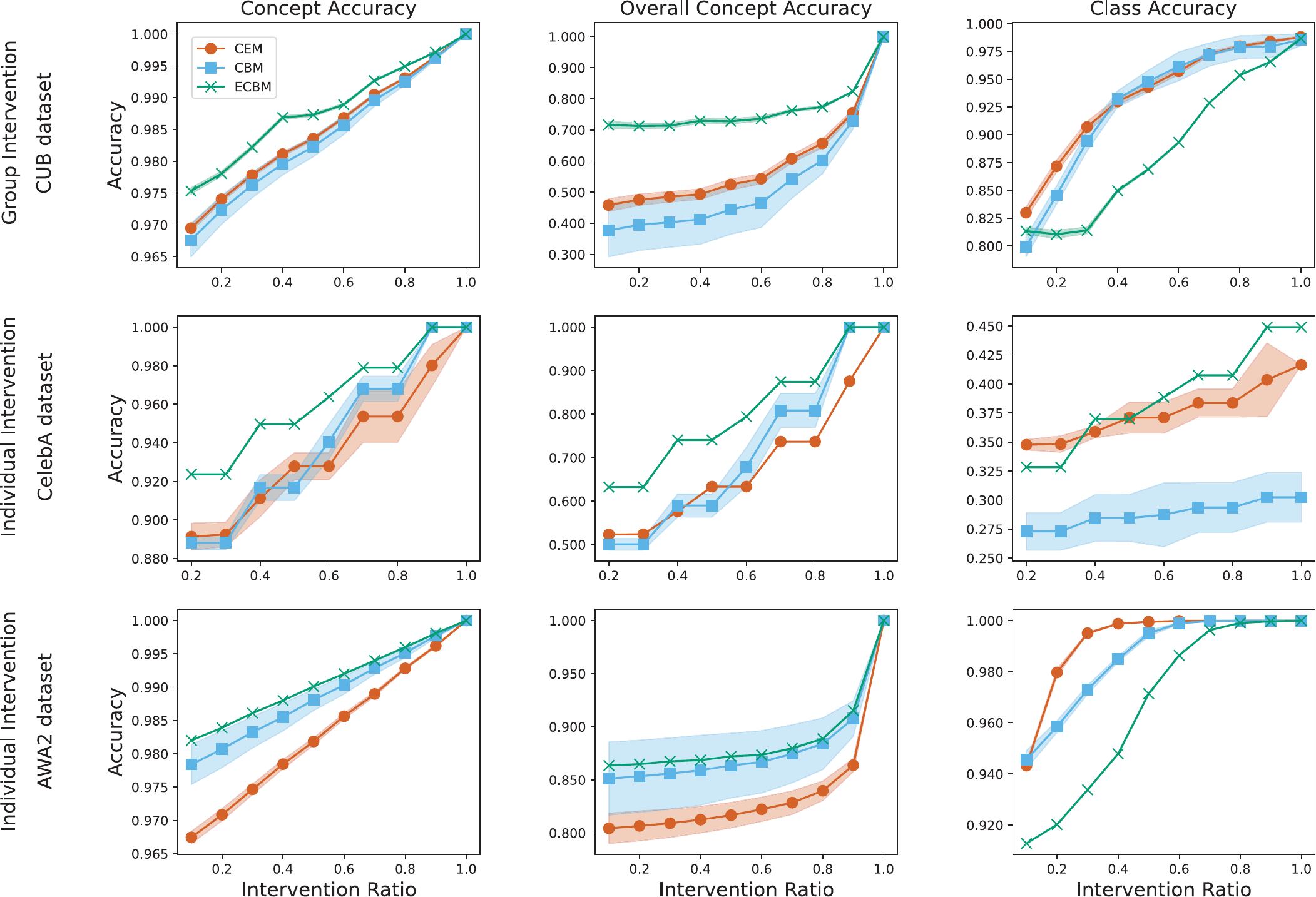}
    \vskip -0.1cm
    \caption{
    Performance with different ratios of intervened concepts on three datasets (with error bars). The intervention ratio denotes the proportion of provided correct concepts. We use CEM with RandInt. CelebA and AWA2 do not have grouped concepts; thus we adopt individual intervention.}
    \label{fig:line_plot}
    \vskip -0.2cm
\end{figure}

\begin{figure}[!t]
    \centering
    \includegraphics[width=\textwidth]{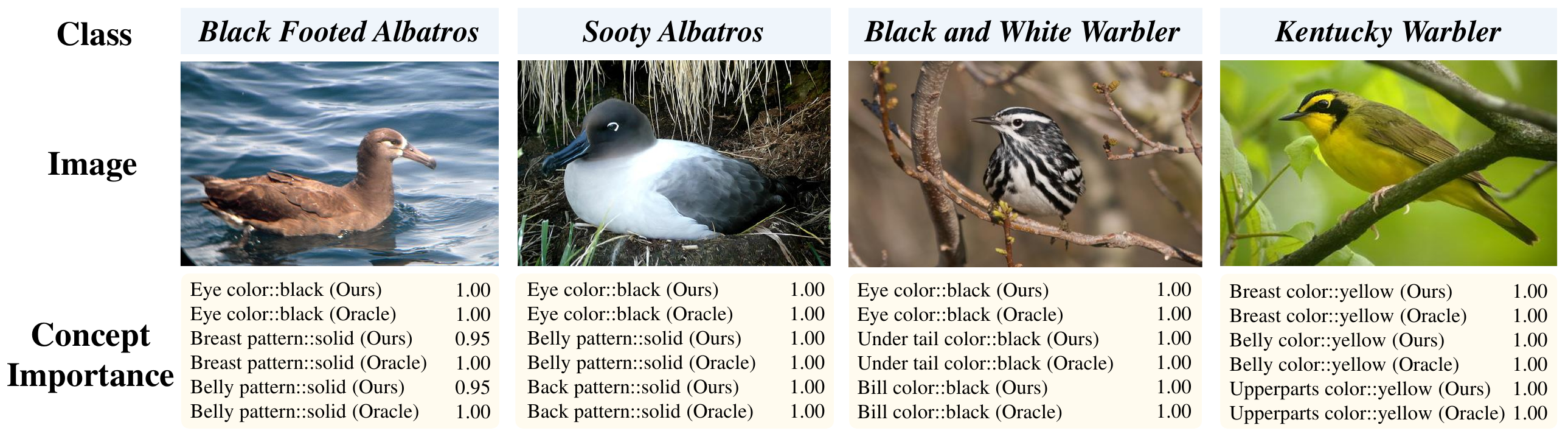}
    \vskip -0.4cm
    \caption{Marginal concept importance ($p(c_k=1 \mid \vy)$) for top $3$ concepts of $4$ different classes computed using Proposition~\ref{prop:marginalclass}. 
    ECBM's estimation (Ours) is very close to the ground truth (Oracle).}
    \label{fig:marginal_imporatance}
    \vskip -0.5cm
\end{figure}

\textbf{Concept Intervention and Correction.} 
Problem Setting 2 in~\secref{sec:ECBM} and Proposition~\ref{prop:jointmissing} introduce the scenario where a practitioner (e.g., a clinician) examine the predicted concepts (and class labels) and intervene on (correct) the concept prediction. An ideal model should leverage such intervention to automatically correct other concepts, thereby improving both interpretability and class prediction accuracy. 
{ Additional experiments (for the background shift dataset \citep{koh2020concept}) in the Appendix~\ref{sec_app:robust} demonstrate the potential of our ECBM to enhance the robustness of CBMs.}
\figref{fig:line_plot} shows three types of accuracy for different methods after intervening on (correcting) different proportions of the concepts, i.e., intervention ratios. In terms of both concept accuracy and overall concept accuracy, we can see that our ECBM outperforms the baselines across all intervention ratios. In terms of class accuracy, { ECBM underperforms the vanilla CBM and the state-of-the-art CEM (with RandInt);}
this is because they have strict concept bottlenecks, and therefore even very few correct concepts can significantly improve class accuracy. 
{ Note that the primary focus of our ECBM is not class accuracy enhancement (detailed explanations and individual intervention on the CUB dataset (\figref{fig:cubindividual}) can be found in Appendix~\ref{sec_app:intervention}). We also provide further evidence demonstrating how our model can mitigate concept leakage in \figref{fig:leakage} of Appendix~\ref{sec_app:intervention}.}

\textbf{Conditional Interpretations.} 
\figref{fig:marginal_imporatance} shows the marginal concept importance ($p(c_k \mid \vy)$) for top $3$ concepts of $4$ different classes, computed using Proposition~\ref{prop:marginalclass}. 
Our ECBM can provide interpretation on which concepts are the most important for predicting each class. For example, ECBM correctly identifies ``eye color::black'' and ``bill color::black'' as top concepts for ``Black and White Warble''; for a similar class ``Kentucky Warble'', ECBM correctly identifies ``breast color::yellow'' and ``belly color::yellow'' as its top concepts. 
Quantitatively, ECBM's estimation (Ours) is very close to the ground truth (Oracle). 

\figref{fig:concept_prob}(a) and \figref{fig:concept_prob}(b) show how ECBM interprets concept relations for a specific class. We show results for the first $20$ concepts in CUB (see \tabref{tab:concept_name} in Appendix~\ref{app:images} for the concept list); we include full results (ECBM, CBM and CEM) on all $112$ concepts in~\appref{app:images}. 
Specifically, \figref{fig:concept_prob}(a) shows the joint class-specific concept importance, i.e., $p(c_{k^{\prime}}=1, c_k=1 \mid \vy)$ (with $\vy$ as ``Black and White Warble''), computed using Proposition~\ref{prop:jointclass} versus the ground truth. For example, ECBM correctly estimates that for the class ``Black and White Warble'', concept ``belly color'' and ``under tail color'' have high joint probability; 
this is intuitive since different parts of a bird usually have the same color. 
Similarly, \figref{fig:concept_prob}(b) shows class-specific conditional probability between different concepts, i.e., $p(c_k=1 \mid c_{k^{\prime}}=1, \vy)$ (with $\vy$ as ``Black and White Warble''), computed using Proposition~\ref{prop:correctconcept}. 
Besides class-specific interpretation, \figref{fig:concept_prob}(c) shows how ECBM interprets concept relations in general using conditional probability between concepts, i.e., $p(c_{k} \mid c_{k^{\prime}})$, computed using Proposition~\ref{prop:conditionalconcepts}. 
Quantitatively, the average L1 error (in the range $[0,1]$) for \figref{fig:concept_prob}(a-c) is $0.0033$, $0.0096$, and $0.0017$, respectively, demonstrating ECBM's accurate conditional interpretation. 


\begin{figure}[!t]
    \centering
    \vskip -1.0cm
    \includegraphics[width=0.9\textwidth]{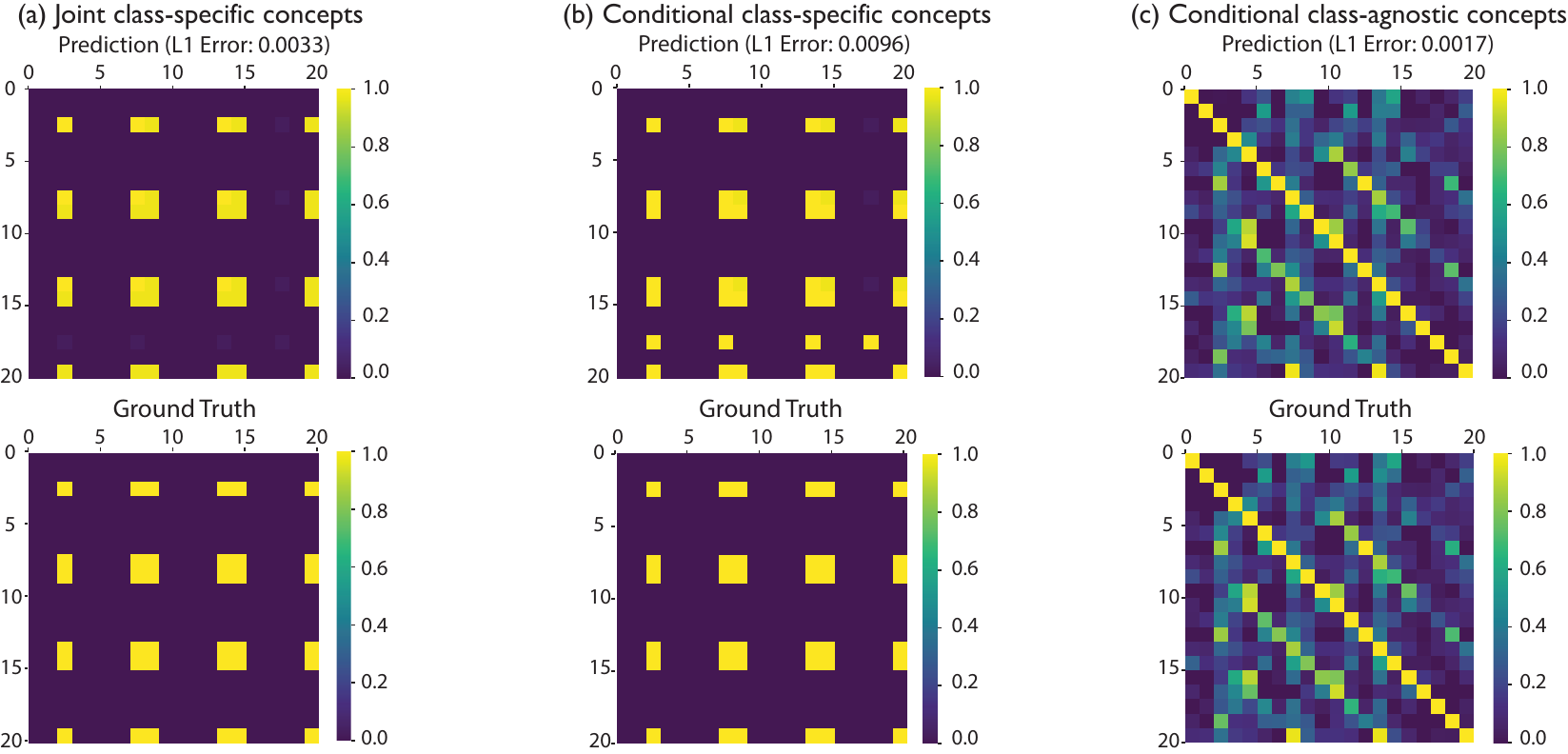}
    \vskip -0.3cm
    \caption{We selected the class ``Black and White Warbler'' in CUB for illustration. \textbf{(a)} Joint class-specific concept importance $p(c_{k^{\prime}}=1, c_k=1 \mid \vy)$ for ECBM's prediction and ground truth derived from Proposition~\ref{prop:jointclass}. \textbf{(b)} Class-specific conditional probability among concepts $p(c_k=1 \mid c_{k^{\prime}}=1, \vy)$ for ECBM's prediction and ground truth derived from Proposition~\ref{prop:correctconcept}. \textbf{(c)} Class-agnostic conditional probability among concepts
    $p(c_{k}=1 \mid c_{k^{\prime}}=1)$ for ECBM's prediction and ground truth derived from Proposition~\ref{prop:conditionalconcepts}. }
    \label{fig:concept_prob}
    \vskip -0.8cm
\end{figure}


\section{Conclusion and Limitations}
In this paper, we go beyond typical concept-based prediction to identify the problems of concept correction and conditional interpretation as valuable tools to provide concept-based interpretations. We propose ECBM, the first general method to 
unify concept-based prediction, concept correction, and conditional interpretation as conditional probabilities under a joint energy formulation. Future work may include extending ECBM to handle uncertainty quantification using Bayesian neural networks~\citep{VIR}, enable unsupervised learning of concepts~\citep{ma2023looks} via graphical models within the hierarchical Bayesian deep learning framework~\citep{BDL,BDLSurvey}, and enable cross-domain interpretation~\citep{CIDA,GRDA,TSDA}. 

\subsubsection*{Acknowledgment}
The authors thank the reviewers/ACs for the constructive comments to improve the paper. The authors are also grateful to Min Shi and Yueying Hu for their comments to improve this paper. This work is supported in part by the National Natural Science Foundation of China under Grant 62306254 and in part by the Hong Kong Innovation and Technology Fund under Grant ITS/030/21.
Xinyue Xu is supported by the Hong Kong PhD Fellowship Scheme (HKPFS) from Hong Kong Research Grants Council (RGC).

\bibliography{iclr2024_conference}

\begin{thebibliography}{50}
\providecommand{\natexlab}[1]{#1}
\providecommand{\url}[1]{\texttt{#1}}
\expandafter\ifx\csname urlstyle\endcsname\relax
  \providecommand{\doi}[1]{doi: #1}\else
  \providecommand{\doi}{doi: \begingroup \urlstyle{rm}\Url}\fi

\bibitem[Alvarez~Melis \& Jaakkola(2018)Alvarez~Melis and Jaakkola]{SENN}
David Alvarez~Melis and Tommi Jaakkola.
\newblock Towards robust interpretability with self-explaining neural networks.
\newblock \emph{Advances in neural information processing systems}, 31, 2018.

\bibitem[Barbiero et~al.(2022)Barbiero, Ciravegna, Giannini, Li{\'o}, Gori, and Melacci]{barbiero2022entropy}
Pietro Barbiero, Gabriele Ciravegna, Francesco Giannini, Pietro Li{\'o}, Marco Gori, and Stefano Melacci.
\newblock Entropy-based logic explanations of neural networks.
\newblock In \emph{Proceedings of the AAAI Conference on Artificial Intelligence}, volume~36, pp.\  6046--6054, 2022.

\bibitem[Barbiero et~al.(2023)Barbiero, Ciravegna, Giannini, Zarlenga, Magister, Tonda, Lio, Precioso, Jamnik, and Marra]{barbiero2023interpretable}
Pietro Barbiero, Gabriele Ciravegna, Francesco Giannini, Mateo~Espinosa Zarlenga, Lucie~Charlotte Magister, Alberto Tonda, Pietro Lio, Frederic Precioso, Mateja Jamnik, and Giuseppe Marra.
\newblock Interpretable neural-symbolic concept reasoning.
\newblock \emph{arXiv preprint arXiv:2304.14068}, 2023.

\bibitem[Belanger \& McCallum(2016)Belanger and McCallum]{belanger2016structured}
David Belanger and Andrew McCallum.
\newblock Structured prediction energy networks.
\newblock In \emph{International Conference on Machine Learning}, pp.\  983--992. PMLR, 2016.

\bibitem[Chen et~al.(2019)Chen, Li, Tao, Barnett, Rudin, and Su]{chen2019looks}
Chaofan Chen, Oscar Li, Daniel Tao, Alina Barnett, Cynthia Rudin, and Jonathan~K Su.
\newblock This looks like that: deep learning for interpretable image recognition.
\newblock \emph{Advances in neural information processing systems}, 32, 2019.

\bibitem[Deng et~al.(2020)Deng, Bakhtin, Ott, Szlam, and Ranzato]{deng2020residual}
Yuntian Deng, Anton Bakhtin, Myle Ott, Arthur Szlam, and Marc'Aurelio Ranzato.
\newblock Residual energy-based models for text generation.
\newblock \emph{arXiv preprint arXiv:2004.11714}, 2020.

\bibitem[Du et~al.(2020)Du, Li, and Mordatch]{du2020compositional}
Yilun Du, Shuang Li, and Igor Mordatch.
\newblock Compositional visual generation with energy based models.
\newblock \emph{Advances in Neural Information Processing Systems}, 33:\penalty0 6637--6647, 2020.

\bibitem[Du et~al.(2021)Du, Li, Sharma, Tenenbaum, and Mordatch]{du2021unsupervised}
Yilun Du, Shuang Li, Yash Sharma, Josh Tenenbaum, and Igor Mordatch.
\newblock Unsupervised learning of compositional energy concepts.
\newblock \emph{Advances in Neural Information Processing Systems}, 34:\penalty0 15608--15620, 2021.

\bibitem[Ghosh et~al.(2023{\natexlab{a}})Ghosh, Yu, Arabshahi, and Batmanghelich]{ghosh2023dividing}
Shantanu Ghosh, Ke~Yu, Forough Arabshahi, and Kayhan Batmanghelich.
\newblock Dividing and conquering a blackbox to a mixture of interpretable models: route, interpret, repeat.
\newblock In \emph{Proceedings of the International Conference on Machine Learning. International Conference on Machine Learning}, volume 202, pp.\  11360. NIH Public Access, 2023{\natexlab{a}}.

\bibitem[Ghosh et~al.(2023{\natexlab{b}})Ghosh, Yu, and Batmanghelich]{ghosh2023distilling}
Shantanu Ghosh, Ke~Yu, and Kayhan Batmanghelich.
\newblock Distilling blackbox to interpretable models for efficient transfer learning.
\newblock \emph{arXiv preprint arXiv:2305.17303}, 2023{\natexlab{b}}.

\bibitem[Grathwohl et~al.(2019)Grathwohl, Wang, Jacobsen, Duvenaud, Norouzi, and Swersky]{grathwohl2019your}
Will Grathwohl, Kuan-Chieh Wang, J{\"o}rn-Henrik Jacobsen, David Duvenaud, Mohammad Norouzi, and Kevin Swersky.
\newblock Your classifier is secretly an energy based model and you should treat it like one.
\newblock \emph{arXiv preprint arXiv:1912.03263}, 2019.

\bibitem[Havasi et~al.(2022)Havasi, Parbhoo, and Doshi-Velez]{havasi2022addressing}
Marton Havasi, Sonali Parbhoo, and Finale Doshi-Velez.
\newblock Addressing leakage in concept bottleneck models.
\newblock \emph{Advances in Neural Information Processing Systems}, 35:\penalty0 23386--23397, 2022.

\bibitem[He et~al.(2016)He, Zhang, Ren, and Sun]{he2016deep}
Kaiming He, Xiangyu Zhang, Shaoqing Ren, and Jian Sun.
\newblock Deep residual learning for image recognition.
\newblock In \emph{Proceedings of the IEEE conference on computer vision and pattern recognition}, pp.\  770--778, 2016.

\bibitem[Kazhdan et~al.(2020)Kazhdan, Dimanov, Jamnik, Li{\`o}, and Weller]{kazhdan2020now}
Dmitry Kazhdan, Botty Dimanov, Mateja Jamnik, Pietro Li{\`o}, and Adrian Weller.
\newblock Now you see me (cme): concept-based model extraction.
\newblock \emph{arXiv preprint arXiv:2010.13233}, 2020.

\bibitem[Kim et~al.(2023)Kim, Jung, Park, Kim, and Yoon]{kim2023probabilistic}
Eunji Kim, Dahuin Jung, Sangha Park, Siwon Kim, and Sungroh Yoon.
\newblock Probabilistic concept bottleneck models.
\newblock \emph{arXiv preprint arXiv:2306.01574}, 2023.

\bibitem[Koh et~al.(2020)Koh, Nguyen, Tang, Mussmann, Pierson, Kim, and Liang]{koh2020concept}
Pang~Wei Koh, Thao Nguyen, Yew~Siang Tang, Stephen Mussmann, Emma Pierson, Been Kim, and Percy Liang.
\newblock Concept bottleneck models.
\newblock In \emph{International Conference on Machine Learning}, pp.\  5338--5348. PMLR, 2020.

\bibitem[Kumar et~al.(2009)Kumar, Berg, Belhumeur, and Nayar]{kumar2009attribute}
Neeraj Kumar, Alexander~C Berg, Peter~N Belhumeur, and Shree~K Nayar.
\newblock Attribute and simile classifiers for face verification.
\newblock In \emph{2009 IEEE 12th international conference on computer vision}, pp.\  365--372. IEEE, 2009.

\bibitem[Lampert et~al.(2009)Lampert, Nickisch, and Harmeling]{lampert2009learning}
Christoph~H Lampert, Hannes Nickisch, and Stefan Harmeling.
\newblock Learning to detect unseen object classes by between-class attribute transfer.
\newblock In \emph{2009 IEEE conference on computer vision and pattern recognition}, pp.\  951--958. IEEE, 2009.

\bibitem[LeCun et~al.(2006)LeCun, Chopra, Hadsell, Ranzato, and Huang]{lecun2006tutorial}
Yann LeCun, Sumit Chopra, Raia Hadsell, M~Ranzato, and Fujie Huang.
\newblock A tutorial on energy-based learning.
\newblock \emph{Predicting structured data}, 1\penalty0 (0), 2006.

\bibitem[Li et~al.(2022)Li, Du, van~de Ven, and Mordatch]{li2022energy}
Shuang Li, Yilun Du, Gido van~de Ven, and Igor Mordatch.
\newblock Energy-based models for continual learning.
\newblock In \emph{Conference on Lifelong Learning Agents}, pp.\  1--22. PMLR, 2022.

\bibitem[Lin et~al.(2022)Lin, Lan, Wang, and Li]{OrphicX}
Wanyu Lin, Hao Lan, Hao Wang, and Baochun Li.
\newblock Orphicx: A causality-inspired latent variable model for interpreting graph neural networks.
\newblock In \emph{CVPR}, 2022.

\bibitem[Liu et~al.(2023{\natexlab{a}})Liu, Du, Li, Tenenbaum, and Torralba]{liu2023unsupervised}
Nan Liu, Yilun Du, Shuang Li, Joshua~B Tenenbaum, and Antonio Torralba.
\newblock Unsupervised compositional concepts discovery with text-to-image generative models.
\newblock In \emph{Proceedings of the IEEE/CVF International Conference on Computer Vision}, pp.\  2085--2095, 2023{\natexlab{a}}.

\bibitem[Liu et~al.(2023{\natexlab{b}})Liu, Xu, He, Hao, Lee, and Wang]{TSDA}
Tianyi Liu, Zihao Xu, Hao He, Guangyuan Hao, Guang-He Lee, and Hao Wang.
\newblock Taxonomy-structured domain adaptation.
\newblock In \emph{ICML}, 2023{\natexlab{b}}.

\bibitem[Liu et~al.(2015)Liu, Luo, Wang, and Tang]{liu2015deep}
Ziwei Liu, Ping Luo, Xiaogang Wang, and Xiaoou Tang.
\newblock Deep learning face attributes in the wild.
\newblock In \emph{Proceedings of the IEEE international conference on computer vision}, pp.\  3730--3738, 2015.

\bibitem[Ma et~al.(2023)Ma, Zhao, Chen, and Rudin]{ma2023looks}
Chiyu Ma, Brandon Zhao, Chaofan Chen, and Cynthia Rudin.
\newblock This looks like those: Illuminating prototypical concepts using multiple visualizations.
\newblock \emph{arXiv preprint arXiv:2310.18589}, 2023.

\bibitem[Nijkamp et~al.(2020)Nijkamp, Hill, Han, Zhu, and Wu]{nijkamp2020anatomy}
Erik Nijkamp, Mitch Hill, Tian Han, Song-Chun Zhu, and Ying~Nian Wu.
\newblock On the anatomy of mcmc-based maximum likelihood learning of energy-based models.
\newblock In \emph{Proceedings of the AAAI Conference on Artificial Intelligence}, volume~34, pp.\  5272--5280, 2020.

\bibitem[Oikarinen et~al.(2023)Oikarinen, Das, Nguyen, and Weng]{oikarinen2023label}
Tuomas Oikarinen, Subhro Das, Lam~M Nguyen, and Tsui-Wei Weng.
\newblock Label-free concept bottleneck models.
\newblock \emph{arXiv preprint arXiv:2304.06129}, 2023.

\bibitem[Rooshenas et~al.(2019)Rooshenas, Zhang, Sharma, and McCallum]{rooshenas2019search}
Amirmohammad Rooshenas, Dongxu Zhang, Gopal Sharma, and Andrew McCallum.
\newblock Search-guided, lightly-supervised training of structured prediction energy networks.
\newblock \emph{Advances in Neural Information Processing Systems}, 32, 2019.

\bibitem[Rudin(2019)]{rudin2019stop}
Cynthia Rudin.
\newblock Stop explaining black box machine learning models for high stakes decisions and use interpretable models instead.
\newblock \emph{Nature machine intelligence}, 1\penalty0 (5):\penalty0 206--215, 2019.

\bibitem[Rudin et~al.(2022)Rudin, Chen, Chen, Huang, Semenova, and Zhong]{rudin2022interpretable}
Cynthia Rudin, Chaofan Chen, Zhi Chen, Haiyang Huang, Lesia Semenova, and Chudi Zhong.
\newblock Interpretable machine learning: Fundamental principles and 10 grand challenges.
\newblock \emph{Statistic Surveys}, 16:\penalty0 1--85, 2022.

\bibitem[Sarkar et~al.(2022)Sarkar, Vijaykeerthy, Sarkar, and Balasubramanian]{sarkar2022framework}
Anirban Sarkar, Deepak Vijaykeerthy, Anindya Sarkar, and Vineeth~N Balasubramanian.
\newblock A framework for learning ante-hoc explainable models via concepts.
\newblock In \emph{Proceedings of the IEEE/CVF Conference on Computer Vision and Pattern Recognition}, pp.\  10286--10295, 2022.

\bibitem[Tu \& Gimpel(2019)Tu and Gimpel]{tu2019benchmarking}
Lifu Tu and Kevin Gimpel.
\newblock Benchmarking approximate inference methods for neural structured prediction.
\newblock \emph{arXiv preprint arXiv:1904.01138}, 2019.

\bibitem[Tu et~al.(2020)Tu, Pang, Wiseman, and Gimpel]{tu2020engine}
Lifu Tu, Richard~Yuanzhe Pang, Sam Wiseman, and Kevin Gimpel.
\newblock Engine: Energy-based inference networks for non-autoregressive machine translation.
\newblock \emph{arXiv preprint arXiv:2005.00850}, 2020.

\bibitem[Wah et~al.(2011)Wah, Branson, Welinder, Perona, and Belongie]{wah2011caltech}
Catherine Wah, Steve Branson, Peter Welinder, Pietro Perona, and Serge Belongie.
\newblock The caltech-ucsd birds-200-2011 dataset.
\newblock 2011.

\bibitem[Wang \& Yan(2023)Wang and Yan]{CounTS}
Hao Wang and Jingquan Yan.
\newblock Self-interpretable time series prediction with counterfactual explanations.
\newblock In \emph{ICML}, 2023.

\bibitem[Wang \& Yeung(2016)Wang and Yeung]{BDL}
Hao Wang and Dit-Yan Yeung.
\newblock Towards bayesian deep learning: A framework and some existing methods.
\newblock \emph{TDKE}, 28\penalty0 (12):\penalty0 3395--3408, 2016.

\bibitem[Wang \& Yeung(2020)Wang and Yeung]{BDLSurvey}
Hao Wang and Dit-Yan Yeung.
\newblock A survey on bayesian deep learning.
\newblock \emph{CSUR}, 53\penalty0 (5):\penalty0 1--37, 2020.

\bibitem[Wang et~al.(2019)Wang, Mao, He, Zhao, Jaakkola, and Katabi]{BIN}
Hao Wang, Chengzhi Mao, Hao He, Mingmin Zhao, Tommi~S Jaakkola, and Dina Katabi.
\newblock Bidirectional inference networks: A class of deep bayesian networks for health profiling.
\newblock In \emph{AAAI}, volume~33, pp.\  766--773, 2019.

\bibitem[Wang et~al.(2020)Wang, He, and Katabi]{CIDA}
Hao Wang, Hao He, and Dina Katabi.
\newblock Continuously indexed domain adaptation.
\newblock In \emph{ICML}, 2020.

\bibitem[Wang \& Wang(2023)Wang and Wang]{VIR}
Ziyan Wang and Hao Wang.
\newblock Variational imbalanced regression: Fair uncertainty quantification via probabilistic smoothing.
\newblock In \emph{NeurIPS}, 2023.

\bibitem[Xian et~al.(2018)Xian, Lampert, Schiele, and Akata]{xian2018zero}
Yongqin Xian, Christoph~H Lampert, Bernt Schiele, and Zeynep Akata.
\newblock Zero-shot learning—a comprehensive evaluation of the good, the bad and the ugly.
\newblock \emph{IEEE transactions on pattern analysis and machine intelligence}, 41\penalty0 (9):\penalty0 2251--2265, 2018.

\bibitem[Xie et~al.(2016)Xie, Lu, Zhu, and Wu]{xie2016theory}
Jianwen Xie, Yang Lu, Song-Chun Zhu, and Yingnian Wu.
\newblock A theory of generative convnet.
\newblock In \emph{International Conference on Machine Learning}, pp.\  2635--2644. PMLR, 2016.

\bibitem[Xie et~al.(2018)Xie, Lu, Gao, Zhu, and Wu]{xie2018cooperative}
Jianwen Xie, Yang Lu, Ruiqi Gao, Song-Chun Zhu, and Ying~Nian Wu.
\newblock Cooperative training of descriptor and generator networks.
\newblock \emph{IEEE transactions on pattern analysis and machine intelligence}, 42\penalty0 (1):\penalty0 27--45, 2018.

\bibitem[Xie et~al.(2019)Xie, Zheng, Fang, Zhu, and Wu]{xie2019cooperative}
Jianwen Xie, Zilong Zheng, Xiaolin Fang, Song-Chun Zhu, and Ying~Nian Wu.
\newblock Cooperative training of fast thinking initializer and slow thinking solver for multi-modal conditional learning.
\newblock \emph{arXiv preprint arXiv:1902.02812}, 2019.

\bibitem[Xu et~al.(2022)Xu, Lee, Wang, Wang, et~al.]{GRDA}
Zihao Xu, Guang-He Lee, Yuyang Wang, Hao Wang, et~al.
\newblock Graph-relational domain adaptation.
\newblock In \emph{ICLR}, 2022.

\bibitem[Xu et~al.(2023)Xu, Hao, He, and Wang]{VDI}
Zihao Xu, Guangyuan Hao, Hao He, and Hao Wang.
\newblock Domain indexing variational bayes: Interpretable domain index for domain adaptation.
\newblock In \emph{ICLR}, 2023.

\bibitem[Yang et~al.(2023)Yang, Panagopoulou, Zhou, Jin, Callison-Burch, and Yatskar]{yang2023language}
Yue Yang, Artemis Panagopoulou, Shenghao Zhou, Daniel Jin, Chris Callison-Burch, and Mark Yatskar.
\newblock Language in a bottle: Language model guided concept bottlenecks for interpretable image classification.
\newblock In \emph{Proceedings of the IEEE/CVF Conference on Computer Vision and Pattern Recognition}, pp.\  19187--19197, 2023.

\bibitem[Yuksekgonul et~al.(2022)Yuksekgonul, Wang, and Zou]{yuksekgonul2022post}
Mert Yuksekgonul, Maggie Wang, and James Zou.
\newblock Post-hoc concept bottleneck models.
\newblock \emph{arXiv preprint arXiv:2205.15480}, 2022.

\bibitem[Zarlenga et~al.(2022)Zarlenga, Barbiero, Ciravegna, Marra, Giannini, Diligenti, Shams, Precioso, Melacci, Weller, et~al.]{zarlenga2022concept}
Mateo~Espinosa Zarlenga, Pietro Barbiero, Gabriele Ciravegna, Giuseppe Marra, Francesco Giannini, Michelangelo Diligenti, Zohreh Shams, Frederic Precioso, Stefano Melacci, Adrian Weller, et~al.
\newblock Concept embedding models.
\newblock \emph{arXiv preprint arXiv:2209.09056}, 2022.

\bibitem[Zhang et~al.(2021)Zhang, Chang, Ma, and Guo]{zhang2021progressive}
Tian Zhang, Dongliang Chang, Zhanyu Ma, and Jun Guo.
\newblock Progressive co-attention network for fine-grained visual classification.
\newblock In \emph{2021 International Conference on Visual Communications and Image Processing (VCIP)}, pp.\  1--5. IEEE, 2021.

\end{thebibliography}
\bibliographystyle{iclr2024_conference}

\appendix
\section{Additional Conditional Interpretations and Proofs}
\label{app:proof}

Given the input $\vx$ and label $\vy$, we
propose to use the Boltzmann distribution to define the
conditional likelihood of label $\vy$ given $\vx$:
\begin{equation}
    \label{eq:a_p_xy}
    p_{\vtheta}(\vy \mid \vx)=\frac{\exp \left(-E_{\vtheta}^{class}(\vx, \vy)\right)}{\sum_{m=1}^M \exp \left(-E_{\vtheta}^{class}\left(\vx, \vy_m\right)\right)},
\end{equation}
where $\vy_{m}\in \mathcal{Y}$ a one-hot vector with the $m$-th dimension set to $1$.

We use the negative log-likelihood as the loss function for an input-class pair $(\vx, \vy)$, and the expanded form is: 
    \begin{equation}
\label{eq:a_elog_xy}
\begin{aligned}
\gL_{class}(\vx, \vy)&=-\log p_{\vtheta}(\vy \mid \vx)
\\
& = -\log \frac{\exp \left(-E_{\vtheta}^{class}(\vx, \vy)\right)}{\sum_{m=1}^M \exp \left(-E_{\vtheta}^{class}\left(\vx, \vy_m\right)\right)}
\\
&= -\log \left(\exp \left(-E_{\vtheta}^{class}(\vx, \vy)\right) \right) + \log \left(\sum_{m=1}^M \exp \left(-E_{\vtheta}^{class}\left(\vx, \vy_m\right)\right) \right) 
\\
&=E_{\vtheta}^{class}(\vx, \vy)+\log \left(\sum_{m=1}^M e^{-E_{\vtheta}^{class}\left(\vx, \vy_m \right)}\right).
\end{aligned}
\end{equation}

Thus, we can have:
\begin{equation}
\label{eq:pxy}
    p(\vy \mid \vx) \propto e^{-E^{class}_{\vtheta}(\vx, \vy)},
\end{equation}
which connect our energy function $E^{class}_{\vtheta}(\vx, \vy)$ to the conditional probability $p(\vy \mid \vx)$.

Similarly, we denote the local concept energy of the energy network parameterized by $\vtheta$ between input $\vx$ and the $k$-th dimension of concept $c_{k}$ as $E^{concept}_{\vtheta}(\vx, {c}_{k})$.
We can obtain:
\begin{equation}
\label{eq:pxck}
    p(c_{k} \mid \vx) \propto e^{-E^{concept}_{\vtheta}(\vx, c_{k})}.
\end{equation}

We denote the global concept-class energy of the energy network parameterized by $\vtheta$ between input $\vc$ and the label $\vy$ as $E^{global}_{\vtheta}(\vc, \vy)$. 
Similarly, we then have:
\begin{equation}
\label{eq:pcy_condition}
 p(\vy \mid \vc)=\frac{e^{ -E_{\vtheta}^{global}(\vc, \vy)}}{\sum_{m=1}^M {e^{-E_{\vtheta}^{global}\left(\vc, \vy_m\right)}}}.
\end{equation}

Similarly, with the joint energy in~\eqnref{eq:joint_E}, we have
\begin{equation}
\label{eq:pxcy}
    p(\vx, \vc, \vy) = \frac{e^{-E_{\vtheta}^{joint}(\vx, \vc, \vy)}}{\sum_{m=1}^{M}  e^{-E_{\vtheta}^{joint}\left(\vx, \vc, \vy_m\right)}}.
\end{equation}


\jointmissing*
\begin{proof}
By definition of joint energy, we have that 
\begin{align}
    p([c_{k}]_{k=K-s+1}^{K},\vy \mid \vx, [c_{k}]_{k=1}^{K-s}) 
    \propto 
    e^{-E^{joint}_{\vtheta}(\vx,\vc,\vy)}. 
\end{align}
Therefore by Bayes rule, we then have
\begin{align}
    p([c_{k}]_{k=K-s+1}^{K},\vy \mid \vx, [c_{k}]_{k=1}^{K-s}) 
    = 
    \frac{ e^{-E^{joint}_{\vtheta}(\vx,\vc,\vy)} }
    {\sum_{m=1}^M\sum_{[c_k]_{K-s+1}^{K}\in\{0,1\}^s} ( e^{-E^{joint}_{\vtheta}(\vx,\vc,\vy_m)}) }, 
\end{align}
concluding the proof. 
\end{proof}


We initially establish the Joint Class-Specific Concept Importance (Proposition~\ref{prop:jointclass}), which we subsequently employ to marginalize $c_k$ and demonstrate the Marginal Class-Specific Concept Importance (Proposition~\ref{prop:marginalclass}).

\jointclass*
\begin{proof}
Given \eqnref{eq:pxy}, \eqnref{eq:pxck} and \eqnref{eq:pcy_condition}, we have
\begin{equation}
\begin{aligned}
     p(\vc \mid \vy) &= \frac{p(\vy \mid \vc) \cdot p(\vc)}{p(\vy)} \\
     &= \frac{\sum_{\vx} p(\vy \mid \vc) \cdot p(\vc \mid \vx) \cdot p(\vx)}{p(\vy)} \\
     &= \frac{\sum_{\vx} p(\vy \mid \vc) \cdot (\prod_{k=1}^{K}p(c_{k} \mid \vx)) \cdot p(\vx)}{p(\vy)} \\
     &= \frac{\sum_{\vx} \left( \frac{e^{ -E_{\vtheta}^{global}(\vc, \vy)}}{\sum_{m=1}^M E_{\vtheta}^{global}\left(\vc, \vy_m\right)} \right) \cdot (\prod_{k=1}^{K}p(c_{k} \mid \vx)) \cdot p(\vx)}{\sum_{\vx} p(\vy \mid \vx) \cdot p(\vx) } \\
     & \propto \frac{\sum_{\vx} \left( \frac{e^{ -E_{\vtheta}^{global}(\vc, \vy)}}{\sum_{m=1}^M E_{\vtheta}^{global}\left(\vc, \vy_m\right)} \right) \cdot ( e^{-\sum_{k=1}^{K}E^{concept}_{\vtheta}(\vx, c_{k})}) \cdot p(\vx)}{\sum_{\vx} e^{-E^{class}_{\vtheta}(\vx, \vy)} \cdot p(\vx)},
\end{aligned} 
\end{equation}
where
\begin{equation}
    \begin{aligned}
        p(\vy) &= \sum_{x} p(\vx, \vy) 
        \\
        &= \sum_{\vx} p(\vy \mid \vx) \cdot p(\vx) 
        \\
        &= \sum_{\vx} e^{-E^{class}_{\vtheta}(\vx, \vy)} \cdot p(\vx),
        \\
        or \\
        &= \sum_{\vc} p(\vy \mid \vc) \cdot p(\vc) \\
        &= \sum_{\vc} (\frac{e^{ -E_{\vtheta}^{global}(\vc, \vy)}}{\sum_{m=1}^M E_{\vtheta}^{global}\left(\vc, \vy_m\right)})  \cdot p(\vc),
    \end{aligned}
\end{equation}
concluding the proof. 
\end{proof}

\marginalclass*
\begin{proof}
Given \eqnref{eq:pxy}, \eqnref{eq:pxck}, \eqnref{eq:pcy_condition} and Proposition~\ref{prop:jointclass}, marginal class-specific concept importance is marginalize $c_k \in \vc$ of Proposition~\ref{prop:jointclass}, we have
\begin{equation}
\begin{aligned}
     p(c_{k} \mid \vy) 
     &= \sum_{\vc_{\dash k}} p(c_{k}, \vc_{\dash k} \mid \vy)
     \\
     & \propto \sum_{\vc_{\dash k}} \frac{\sum_{\vx} \left( \frac{e^{ -E_{\vtheta}^{global}(\vc, \vy)}}{\sum_{m=1}^M E_{\vtheta}^{global}\left(\vc, \vy_m\right)} \right) \cdot ( e^{-\sum_{k^{\prime}=1}^{K}E^{concept}_{\vtheta}(\vx, c_{k^{\prime}})}) \cdot p(\vx)}{\sum_{\vx} e^{-E^{class}_{\vtheta}(\vx, \vy)} \cdot p(\vx)},
\end{aligned} 
\end{equation}
concluding the proof. 
\end{proof}

\correctconcept*
\begin{proof}
Given \eqnref{eq:pxck}, \eqnref{eq:pcy_condition}, and Proposition~\ref{prop:jointclass}, we have
    \begin{equation}
    \begin{aligned}
        p(c_k \mid c_{k^{\prime}}, \vy) &= \frac{p(c_k, c_{k^{\prime}} \mid \vy)}{p(c_{k^{\prime}} \mid \vy)}
        \\
        &= \frac{\sum_{[c_{j}]_{j \neq k, k^{\prime}}^{K} \in \{0,1\}^{K-2}} p(\vc \mid \vy)}{\sum_{[c_{j}]_{j \neq k}^{K} \in \{0,1\}^{K-1}} p(\vc \mid \vy)}
        \\
        &= \frac{\sum_{[c_{j}]_{j \neq k, k^{\prime}}^{K} \in \{0,1\}^{K-2}} p(c_{k}, c_{k^{\prime}}, {[c_{j}]_{j \neq k, k^{\prime}}^{K}} \mid \vy)}{\sum_{[c_{j}]_{j \neq k}^{K} \in \{0,1\}^{K-1}} p(c_{k}, \vc_{\dash k} \mid \vy)}
        \\
        &\propto \frac{ \sum_{[c_{j}]_{j \neq k, k^{\prime}}^{K} \in \{0,1\}^{K-2}} \frac{\sum_{\vx} \left( \frac{e^{ -E_{\vtheta}^{global}(\vc, \vy)}}{\sum_{m=1}^M E_{\vtheta}^{global}\left(\vc, \vy_m\right)} \right) \cdot ( e^{-\sum_{l=1}^{K}E^{concept}_{\vtheta}(\vx, c_{l})}) \cdot p(\vx)}{\sum_{\vx} e^{-E^{class}_{\vtheta}(\vx, \vy)} \cdot p(\vx)}}{\sum_{[c_{j}]_{j \neq k}^{K} \in \{0,1\}^{K-1}} \frac{\sum_{\vx} \left( \frac{e^{ -E_{\vtheta}^{global}(\vc, \vy)}}{\sum_{m=1}^M E_{\vtheta}^{global}\left(\vc, \vy_m\right)} \right) \cdot ( e^{-\sum_{l=1}^{K}E^{concept}_{\vtheta}(\vx, c_{l})}) \cdot p(\vx)}{\sum_{\vx} e^{-E^{class}_{\vtheta}(\vx, \vy)} \cdot p(\vx)}}
        \\ 
        & \propto \frac{\sum_{[c_{j}]_{j \neq k, k^{\prime}}^{K} \in \{0,1\}^{K-2}} \sum_{\vx} \left( \frac{e^{ -E_{\vtheta}^{global}(\vc, \vy)}}{\sum_{m=1}^M E_{\vtheta}^{global}\left(\vc, \vy_m\right)} \right) \cdot ( e^{-\sum_{l=1}^{K}E^{concept}_{\vtheta}(\vx, c_{l})}) \cdot p(\vx)}{\sum_{[c_{j}]_{j \neq k}^{K} \in \{0,1\}^{K-1}} \sum_{\vx} \left( \frac{e^{ -E_{\vtheta}^{global}(\vc, \vy)}}{\sum_{m=1}^M E_{\vtheta}^{global}\left(\vc, \vy_m\right)} \right) \cdot ( e^{-\sum_{l=1}^{K}E^{concept}_{\vtheta}(\vx, c_{l})}) \cdot p(\vx)},
\end{aligned}
\end{equation}
concluding the proof. 
\end{proof}

\conditionalconcepts*
\begin{proof}
Given Proposition~\ref{prop:jointclass} and Propostion~\ref{prop:correctconcept}, we have
{\tiny
    \begin{equation}
        \begin{aligned}
            p(c_k \mid c_{k^{\prime}}) &= \frac{p(c_k, c_{k^{\prime}})}{p(c_{k^{\prime}})}
            \\
            &= \frac{\sum_{m=1}^{M} p(c_k, c_{k^{\prime}} \mid \vy_m)  \cdot p(\vy_m)}{\sum_{m=1}^{M} p(c_{k^{\prime}} \mid \vy_m) \cdot p(\vy_m)}
            \\
            &= \frac{\sum_{m=1}^{M} \sum_{[c_{j}]_{j \neq k, k^{\prime}}^{K} \in \{0,1\}^{K-2}} p(c_{k}, c_{k^{\prime}}, {[c_{j}]_{j \neq k, k^{\prime}}^{K}} \mid \vy_m) \cdot p(\vy_m)}{\sum_{m=1}^{M} \sum_{[c_{j}]_{j \neq k}^{K} \in \{0,1\}^{K-1}} p(c_{k}, \vc_{\dash k} \mid \vy_m) \cdot p(\vy_m)}
            \\
            & \propto \frac{\sum_{m=1}^{M} \sum_{[c_{j}]_{j \neq k, k^{\prime}}^{K} \in \{0,1\}^{K-2}} \sum_{\vx} \left( \frac{e^{ -E_{\vtheta}^{global}(\vc, \vy)}}{\sum_{m=1}^M E_{\vtheta}^{global}\left(\vc, \vy_m\right)} \right) \cdot ( e^{-\sum_{l=1}^{K}E^{concept}_{\vtheta}(\vx, c_{l})}) \cdot p(\vx) \cdot p(\vy_m)}{ \sum_{m=1}^{M} \sum_{[c_{j}]_{j \neq k}^{K} \in \{0,1\}^{K-1}} \sum_{\vx} \left( \frac{e^{ -E_{\vtheta}^{global}(\vc, \vy)}}{\sum_{m=1}^M E_{\vtheta}^{global}\left(\vc, \vy_m\right)} \right) 
            \cdot ( e^{-\sum_{l=1}^{K}E^{concept}_{\vtheta}(\vx, c_{l})}) \cdot p(\vx) \cdot p(\vy_m)},         
        \end{aligned}
    \end{equation}
}
concluding the proof. 
\end{proof}

\begin{restatable}[\textbf{Missing Concept Probability}]{proposition}{missing}
\label{prop:missing}
Given the ground-truth values of concepts $[c_{k}]_{k=1}^{K-s}$, the joint probability of the remaining concepts $c_{k^{\prime}}$ can be computed as follows:
\begin{align}
    p([c_{k}]_{k=K-s+1}^{K}\mid \vx, [c_{k}]_{k=1}^{K-s}) 
    = 
    \frac{ \sum_{m=1}^M e^{-E^{joint}_{\vtheta}(\vx,\vc,\vy_m)} }
    {\sum_{m=1}^M\sum_{[c_k]_{K-s+1}^{K}\in\{0,1\}^s} ( e^{-E^{joint}_{\vtheta}(\vx,\vc,\vy_m)}) }, 
\end{align}
where $E^{joint}_{\vtheta}(\vx,\vc,\vy)$ is the joint energy defined in~\eqnref{eq:joint_E}.
\end{restatable}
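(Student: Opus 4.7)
The plan is to derive Proposition~\ref{prop:missing} as a direct marginalization consequence of Proposition~\ref{prop:jointmissing}. Since Proposition~\ref{prop:jointmissing} already supplies a closed-form expression for the joint conditional $p([c_{k}]_{k=K-s+1}^{K},\vy \mid \vx, [c_{k}]_{k=1}^{K-s})$ in terms of the joint energy $E^{joint}_{\vtheta}(\vx,\vc,\vy)$, the target quantity is obtained by summing that expression over all class labels $\vy_m$ and invoking the law of total probability.

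Concretely, I would first write
\begin{equation*}
p([c_{k}]_{k=K-s+1}^{K}\mid \vx, [c_{k}]_{k=1}^{K-s})
= \sum_{m=1}^M p([c_{k}]_{k=K-s+1}^{K},\vy_m \mid \vx, [c_{k}]_{k=1}^{K-s}),
\end{equation*}
then substitute the formula from Proposition~\ref{prop:jointmissing} into each summand. The denominator there is independent of $m$ (it already sums $m$ and $[c_k]_{K-s+1}^{K}$), so it factors out of the sum over $m$ and remains unchanged, while the numerator becomes $\sum_{m=1}^M e^{-E^{joint}_{\vtheta}(\vx,\vc,\vy_m)}$. This yields exactly the stated formula.

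There is no genuine obstacle here: the only ``delicate'' point is making sure the support of the outer sum in the numerator (over $m$) is not conflated with the inner sum in the denominator (over both $m$ and the $2^s$ missing-concept configurations), so I would state these explicitly to avoid confusion. A small sanity check would verify that summing the resulting expression over all $[c_k]_{K-s+1}^{K} \in \{0,1\}^s$ yields $1$, confirming it is a properly normalized conditional distribution, which follows immediately since the numerator summed over these configurations equals the denominator by construction.
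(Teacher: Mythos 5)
Your proposal is correct and matches the paper's own argument, which likewise obtains Proposition~\ref{prop:missing} by marginalizing $\vy$ out of the conditional in Proposition~\ref{prop:jointmissing}; you simply spell out the substitution and normalization check in more detail than the paper does.
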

\begin{proof}
This follows directly after marginalizing $\vy$ out from $p([c_{k}]_{k=K-s+1}^{K},\vy \mid \vx, [c_{k}]_{k=1}^{K-s}) $ in Proposition~\ref{prop:jointmissing}. 
\end{proof}

\begin{restatable}[\textbf{Conditional Class Probability Given a Known Concept}]{proposition}{conditionalclasses}
\label{prop:conditionalclasses}
    Given the input $\vx$ and a concept $c_k$, the conditional probability of label $\vy$ is:
    \begin{equation}
    \begin{aligned}
        p(\vy \mid \vx, c_{k}) 
        & \propto \frac{ \sum_{\vc_{\dash k}} \frac{e^{-E_{\vtheta}^{joint}(\vx, \vc, \vy)}}{\sum_{m=1}^{M} e^{-E_{\vtheta}^{joint}\left(\vx, \vc, \vy_m\right)}} }{e^{-E^{concept}_{\vtheta}(\vx, c_{k})}}. 
    \end{aligned}
\end{equation}
\end{restatable}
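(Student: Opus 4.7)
The plan is to derive the claim by Bayes' rule, marginalizing the unobserved concepts $\vc_{\dash k}$, and then invoking the Boltzmann identities for $p(\vy\mid\vx,\vc)$ and $p(c_k\mid\vx)$ established at the top of Appendix~\ref{app:proof}. This mirrors the pattern already used in the proofs of Proposition~\ref{prop:jointmissing}--\ref{prop:conditionalconcepts}.

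Concretely, I would begin from the Bayes factorization $p(\vy \mid \vx, c_k) = p(\vy, c_k \mid \vx) / p(c_k \mid \vx)$ and expand the numerator by summing over the unknown concepts: $p(\vy, c_k \mid \vx) = \sum_{\vc_{\dash k}} p(\vy, \vc \mid \vx) = \sum_{\vc_{\dash k}} p(\vy \mid \vx, \vc)\, p(\vc \mid \vx)$. The inner conditional $p(\vy \mid \vx, \vc)$ is supplied directly by the joint-energy Boltzmann formulation (\eqnref{eq:pxcy}), namely $p(\vy \mid \vx, \vc) = e^{-E_{\vtheta}^{joint}(\vx, \vc, \vy)} / \sum_{m} e^{-E_{\vtheta}^{joint}(\vx, \vc, \vy_m)}$, which is exactly the quantity appearing inside the sum in the proposition. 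The denominator $p(c_k \mid \vx)$ is then replaced by its energy-based form $p(c_k \mid \vx) \propto e^{-E_{\vtheta}^{concept}(\vx, c_k)}$ from \eqnref{eq:pxck}, producing the $e^{-E_{\vtheta}^{concept}(\vx, c_k)}$ term in the denominator of the claim.

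The main obstacle I anticipate is properly handling the prior $p(\vc \mid \vx)$ inside the marginalization, since the proposition's sum over $\vc_{\dash k}$ involves $p(\vy \mid \vx, \vc)$ alone, unweighted by a $\vc_{\dash k}$-prior. Following the tacit convention used in the earlier propositions (which rely on per-concept factorization of $p(\vc\mid\vx)$ through $e^{-E^{concept}_{\vtheta}(\vx, c_{k'})}$), I would assume the conditional independence $p(\vc \mid \vx) = \prod_{k'} p(c_{k'} \mid \vx)$. The contribution of $\vc_{\dash k}$ then becomes $\prod_{k' \neq k} p(c_{k'} \mid \vx)$, which is a function of $\vx$ alone and does not depend on $\vy$; hence it is absorbed into the proportionality constant. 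Collecting the remaining pieces --- the joint-energy ratio in the numerator and the concept-energy factor in the denominator --- yields the stated proportionality.
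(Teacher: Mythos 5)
Your overall skeleton---Bayes' rule, marginalization over $\vc_{\dash k}$, then substitution of the Boltzmann identities---is the same as the paper's, but there is a genuine error in how you handle the factor $p(\vc \mid \vx)$ inside the marginalization. You write the numerator as $\sum_{\vc_{\dash k}} p(\vy \mid \vx, \vc)\, p(\vc \mid \vx)$, factorize $p(\vc \mid \vx) = \prod_{k'} p(c_{k'} \mid \vx)$, and then claim that the leftover $\prod_{k' \neq k} p(c_{k'} \mid \vx)$ ``is a function of $\vx$ alone'' and can be absorbed into the proportionality constant. That factor does not depend on $\vy$, but it \emph{does} depend on $\vc_{\dash k}$, which is the summation index: each term of the sum over $\vc_{\dash k}$ carries a different weight $\prod_{k' \neq k} p(c_{k'} \mid \vx)$. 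A term-dependent weight inside a sum cannot be pulled out as a global constant ($\sum_i a_i b_i \neq C \sum_i a_i$ unless $b_i$ is constant in $i$), so your derivation actually yields $\sum_{\vc_{\dash k}} p(\vy \mid \vx, \vc) \prod_{k' \neq k} p(c_{k'} \mid \vx)$ up to normalization, which is not the unweighted sum appearing in the proposition.

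The paper sidesteps this entirely by reading \eqnref{eq:pxcy} as an expression for the \emph{joint} $p(\vx, \vc, \vy) = e^{-E_{\vtheta}^{joint}(\vx, \vc, \vy)} / \sum_{m=1}^{M} e^{-E_{\vtheta}^{joint}(\vx, \vc, \vy_m)}$, so the numerator $p(\vy, \vx, c_k) = \sum_{\vc_{\dash k}} p(\vy, \vx, \vc)$ substitutes directly into the claimed form with no residual $p(\vc \mid \vx)$ weight, and the denominator $p(\vx, c_k) = p(c_k \mid \vx)\, p(\vx) \propto e^{-E^{concept}_{\vtheta}(\vx, c_k)}$ contributes the remaining factor (with $p(\vx)$ legitimately absorbed, since it is constant in $\vy$). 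Whether \eqnref{eq:pxcy} is better interpreted as a joint or as the conditional $p(\vy \mid \vx, \vc)$ is a fair question to raise about the paper itself---your reading is arguably the more natural one for a distribution normalized only over $\vy$---but under your reading the stated proposition does not follow without an additional (and unjustified) assumption that $p(\vc_{\dash k} \mid \vx)$ is uniform over $\vc_{\dash k}$. To match the paper's result you should substitute the joint directly from \eqnref{eq:pxcy} rather than decomposing it as $p(\vy \mid \vx, \vc)\, p(\vc \mid \vx)$.
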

\begin{proof}
Given \eqnref{eq:pxck} and \eqnref{eq:pxcy}, marginalize $c_k \in \vc$ of \eqnref{eq:pxcy}, we have
    \begin{equation}
    \begin{aligned}
        p(\vy \mid \vx, c_{k}) &= \frac{p(\vy, \vx, c_{k})}{p(\vx, c_{k})}
        \\
        &= \frac{\sum_{\vc_{\dash k}} p(\vy, \vx, \vc)}{p(\vx, c_{k})} \\
        &= \frac{\sum_{\vc_{\dash k}} p(\vy, \vx, c_k, \vc_{\dash k})}{p(\vx, c_{k})} \\
        &= \frac{ \sum_{\vc_{\dash k}} \frac{e^{-E_{\vtheta}^{joint} (\vx, \vc, \vy)}}{\sum_{m=1}^{M} e^{-E_{\vtheta}^{joint} \left(\vx, \vc, \vy_m\right)}} }{p(c_k \mid \vx) \cdot p(\vx)} \\ 
        & \propto \frac{\sum_{\vc_{\dash k}} \frac{e^{-E_{\vtheta}^{joint} (\vx, \vc, \vy)}}{\sum_{m=1}^{M} e^{-E_{\vtheta}^{joint} \left(\vx, \vc, \vy_m\right)}} }{e^{-E^{concept}_{\vtheta}(\vx, c_{k})}},
    \end{aligned}
\end{equation}
concluding the proof. 
\end{proof}


\section{Implementation Details}\label{sec_app:imp}

\subsection{Hyperparameters}
\label{sec_app:hyper}
For a fair comparison, we use ResNet101~\citep{he2016deep} as the backbone and $299 \times 299$ as the input size for all evaluated methods, except CelebA with $64 \times 64$. We use the SGD optimizer to train the model. We use $\lambda_{c} =0.3$, $\lambda_{g} =0.3$, batch size 64, a learning rate of $1\times10^{-2}$, and at most 300 epochs. In the gradient inference process, we use the Adam optimizer, $\lambda_{c} =0.49$, $\lambda_{g} =0.004$ { (this is equivalent to applying a ratio of 1:1:0.01 for the class, concept and global energy terms, respectively) }, batch size 64, and a learning rate of $1\times10^{-1}$. 
We run all experiments on an NVIDIA RTX3090 GPU. 
In order to enhance the robustness of the $\vc \dash \vy$ energy head, we deployed perturbation augmentation when training ECBMs. Specifically, we perturbed $20\%$ of noisy pairs at probability $p=0.2$ at the input of $\vc \dash \vy$ energy head during the training phase. These are not incorporated during the phases of inference and intervention.


{Regarding hyperparameter $\lambda_{c}=0.49$, this is because in our implementation, we introduce $\lambda_l$ to Eq (12) in the paper, resulting in:
$\lambda_l E_{\theta}^{class}(\vx,\vy) + \lambda_c E_{\theta}^{concept}(\vx,\vc) + \lambda_g E_{\theta}^{global}(\vc,\vy)$. 
We then set $\lambda_l=1$, $\lambda_c=1$ and $\lambda_g=0.01$. 
This is therefore equivalent to setting $\lambda_c=1/(1+1+0.01)=0.49$ for the original Eq (12) without $\lambda_l$:
$E_{\theta}^{class}(\vx,\vy) + \lambda_c E_{\theta}^{concept}(\vx,\vc) + \lambda_g E_{\theta}^{global}(\vc,\vy)$. }

\begin{algorithm}[H]
\caption{Gradient-Based Inference Algorithm}
\label{alg:gradient}
\begin{minipage}{\textwidth}
 \begin{algorithmic}[1]
 \STATE {\bf Input:} Image $\vx$, positive and negative concept embedding $[\vv_k^{(+)}]_{k=1}^K$ and $[\vv_k^{(-)}]_{k=1}^K$, label embedding $[\vu_m]_{m=1}^M$, weight parameters $\lambda_{c}, \lambda_{g}$, learning rate $\eta$, concept and class number $K$, $M$.
 \STATE {\bf Output:} Concept and label probability $\widehat{\vc}$, $\widehat{\vy}$. 
 \STATE Initialize un-normalized concept probability $\widetilde{\vc}$.
 \STATE Initialize un-normalized class probability $\tilde{\vy}$. 
 
 \WHILE{not converge}
  \STATE $\widehat{\vc} \gets Sigmoid(\widetilde{\vc})$ 
  \STATE $\widehat{\vy} \gets Softmax(\widetilde{\vy})$
  \FOR{$k \gets 1$ \textbf{To} $K$}{
  \STATE $\vv_k^{'} \gets \widehat{\vc}_{k} \times \vv_{k}^{(+)} + (1 - \widehat{\vc}_{k}) \times \vv_{k}^{(-)}$
  }
  \ENDFOR
  \FOR{$m \gets 1$ \textbf{To} $M$}{
  \STATE $\vu_m^{'} \gets \widehat{\vy}_{m} \times \vu_m$
  }
  \ENDFOR
  

  \STATE Calculate $\gL_{class}(\vx, {\widehat{\vy}})$ based on~\eqnref{eq:elog_xy}.
  \STATE Calculate $\gL_{concept}(\vx, {\widehat{\vc}})$ based on~\eqnref{eq:sum_loss_xc}.
  \STATE Calculate $\mathcal{L}_{global}({\widehat{\vc}}, {\widehat{\vy}})$ based on~\eqnref{eq:global_cy}.
  \STATE $\mathcal{L}_{total} = \mathcal{L}_{class} + \lambda_{c} \mathcal{L}_{concept} + \lambda_{g} \mathcal{L}_{global}$
  \STATE $\widetilde{\vc} \gets \widetilde{\vc} - \eta \nabla \mathcal{L}_{total}$
  \STATE $\widetilde{\vy} \gets \widetilde{\vy} - \eta \nabla \mathcal{L}_{total}$
 \ENDWHILE
 \STATE $\widehat{\vc} \gets Sigmoid(\widetilde{\vc})$ 
 \STATE $\widehat{\vy} \gets Softmax(\widetilde{\vy})$
 \RETURN $\widehat{\vc}$, $\widehat{\vy}$
\end{algorithmic}
\end{minipage}
\end{algorithm}

\subsection{Energy Model Architectures}\label{app:code}
\tabref{tab:ebm} shows the neural network architectures for different energy functions.

\begin{table}[H]
    \caption{Energy Model Architecture.} 
    \label{tab:ebm}
    \centering
    \begin{subtable}{.33\linewidth}
      \centering
        \caption{$\vx \dash \vy$ energy network.}
        \resizebox{!}{1.85cm}{
        \begin{tabular}{l}
      \toprule
      $\vz$ = FeatureExtractor($\vx$) \\\midrule
      $\vz$ = FC(Input, hidden) ($\vz$) \\ \midrule
      $\vz$ = Dropout(p=0.2) ($\vz$)    \\ \midrule
      $\vu$ = Embedding($\vy$) \\ \midrule
      $\vz$ = $\vz$ * Norm2($\vu$) + $\vz$      \\ \midrule
      $\vz$ = Relu ($\vz$)              \\ \midrule
      energy = FC(hidden, 1) ($\vz$)   \\ 
      \bottomrule
    \end{tabular}
        }
    \end{subtable}%
    \begin{subtable}{.33\linewidth}
      \centering
        \caption{$\vx \dash \vc$ energy network.}
        \resizebox{!}{1.85cm}{
        \begin{tabular}{l}
      \toprule
      $\vz$ = FeatureExtractor($\vx$) \\\midrule
      $\vz$ = FC(Input, hidden) ($\vz$) \\ \midrule
      $\vz$ = Dropout(p=0.2) ($\vz$)    \\ \midrule
      $\vv$ = Embedding($\vc$) \\ \midrule
      $\vz$ = $\vz$ * Norm2($\vv$) + $\vz$      \\ \midrule
      $\vz$ = Relu ($\vz$)              \\ \midrule
      energy = FC(hidden, 1) ($\vz$)   \\ 
      \bottomrule
    \end{tabular}
        }
    \end{subtable} %
    \begin{subtable}{.33\linewidth}
      \centering
        \caption{$\vc \dash \vy$ energy network.}
        \resizebox{!}{1.3cm}{
        \begin{tabular}{l}
      \toprule 
      $\vv$ = Embedding($\vc$) \\ \midrule
      $\vu$ = Embedding($\vy$) \\ \midrule
      $\mathbf{cy}$ = $\vu$ * Norm2($\vv$) + $\vu$      \\ \midrule
      $\mathbf{cy}$ = Relu ($\mathbf{cy}$)             \\ \midrule
      energy = FC(hidden, 1) ($\mathbf{cy}$)   \\ 
      \bottomrule
    \end{tabular}
        }
    \end{subtable} 
\end{table}

\section{More Results}
\label{app:images}
\subsection{Visualization Results}
We selected the ground truth and prediction results of the class ``Black and White Warbler” in CUB dataset as a representative case for visualization. We provide the complete 112 concepts heatmaps of joint class-specific of concepts importance ($p(c_{k^{\prime}}=1, c_k=1 \mid \vy)$), class-specific conditional probability among concepts $p(c_k=1 \mid c_{k^{\prime}}=1, \vy)$ and class-agnostic conditional probability among concepts ($p(c_{k}=1 \mid c_{k^{\prime}}=1)$) in \figref{fig:joint_class_112}, \figref{fig:norm_112} and \figref{fig:correlation_112}, respectively.
Note that $p(c_{k^{\prime}}=1, c_k=1 \mid \vy)=1$ leads to $p(c_k=1 \mid c_{k^{\prime}}=1, \vy)$; since most entries ($p(c_{k^{\prime}}=1, c_k=1 \mid \vy)$) in \figref{fig:concept_prob}(a) are close to $1$, so is \figref{fig:concept_prob}(b). \tabref{tab:concept_name} shows the $20$ specific concept names used in \figref{fig:concept_prob}.

\begin{figure}[htb]
    \centering
    \begin{subfigure}{0.49\textwidth}
        \centering
        \includegraphics[width=\textwidth]{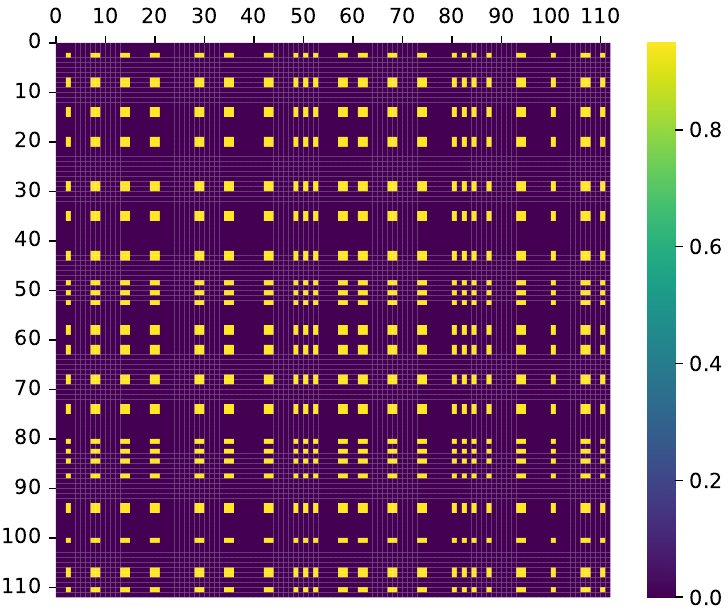}
        \caption{Ground Truth}
        \label{fig:sub1_joint_class_112}
    \end{subfigure}
    \hfill
    \begin{subfigure}{0.49\textwidth}
        \centering
        \includegraphics[width=\textwidth]{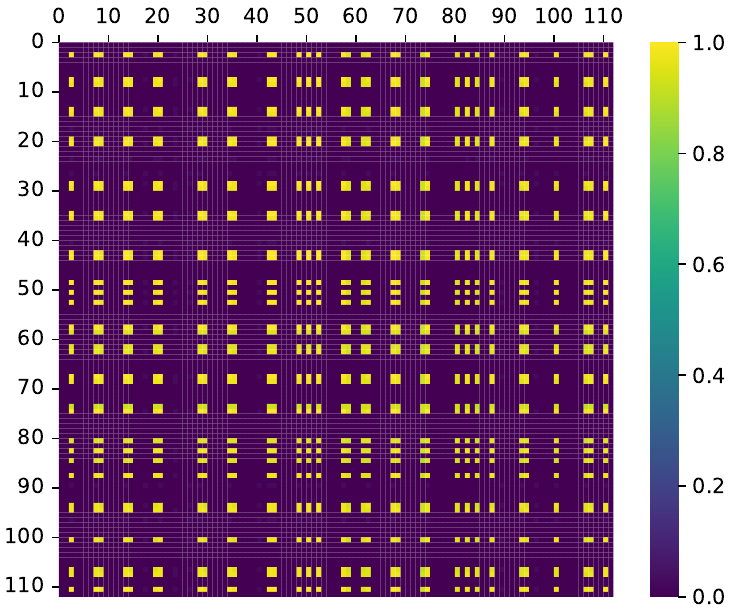}
        \caption{Prediction}
        \label{fig:sub2_joint_class_112}
    \end{subfigure}
    \caption{Joint class-specific of concepts importance heatmap  ($p(c_{k^{\prime}}=1, c_k=1 \mid \vy)$) for ECBM's ground truth and prediction derived from Proposition~\ref{prop:jointclass}.}
    \label{fig:joint_class_112}
\end{figure}

\begin{figure}[htb]
    \centering
    \begin{subfigure}{0.49\textwidth}
        \centering
        \includegraphics[width=\textwidth]{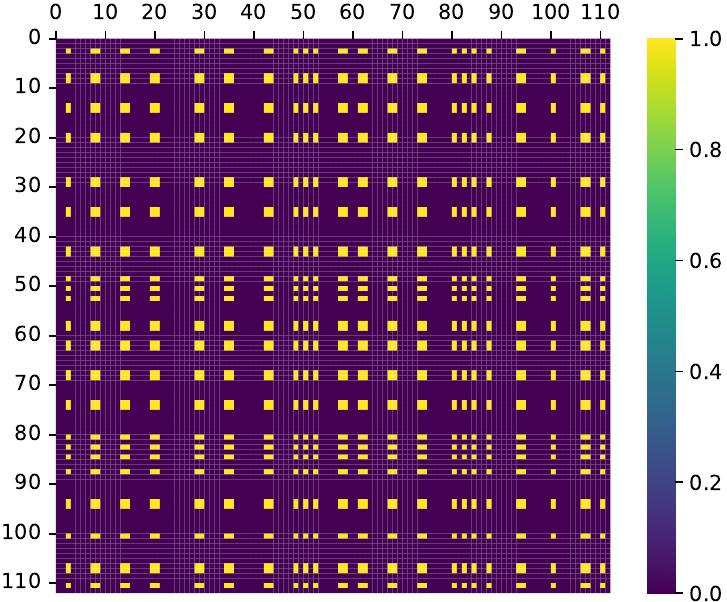}
        \caption{Ground Truth}
        \label{fig:sub1_norm_112}
    \end{subfigure}
    \hfill
    \begin{subfigure}{0.49\textwidth}
        \centering
        \includegraphics[width=\textwidth]{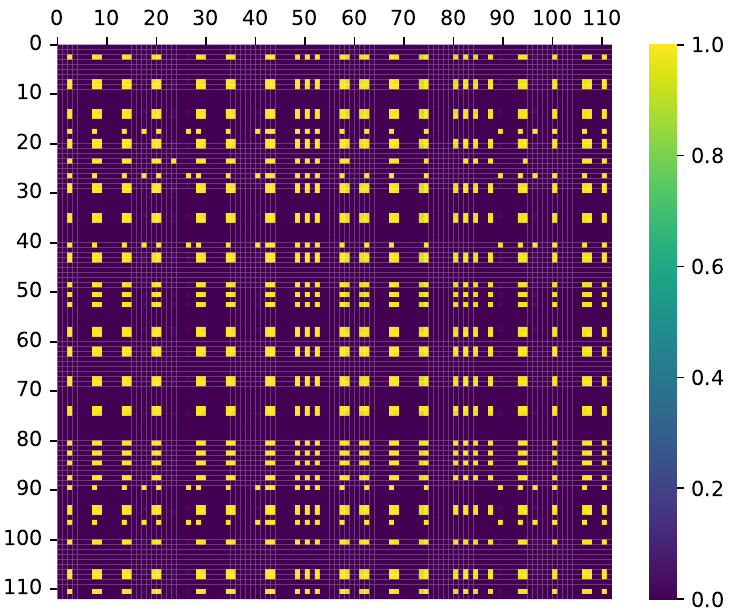}
        \caption{Prediction}
        \label{fig:sub2_norm_112}
    \end{subfigure}
    \caption{Class-specific conditional probability among concepts heatmap $p(c_k=1 \mid c_{k^{\prime}}=1, \vy)$ for ECBM's ground truth and prediction derived from Proposition~\ref{prop:correctconcept}.}
    \label{fig:norm_112}
\end{figure}

\begin{figure}[H]
    \centering
    \begin{subfigure}{0.49\textwidth}
        \centering
        \includegraphics[width=\textwidth]{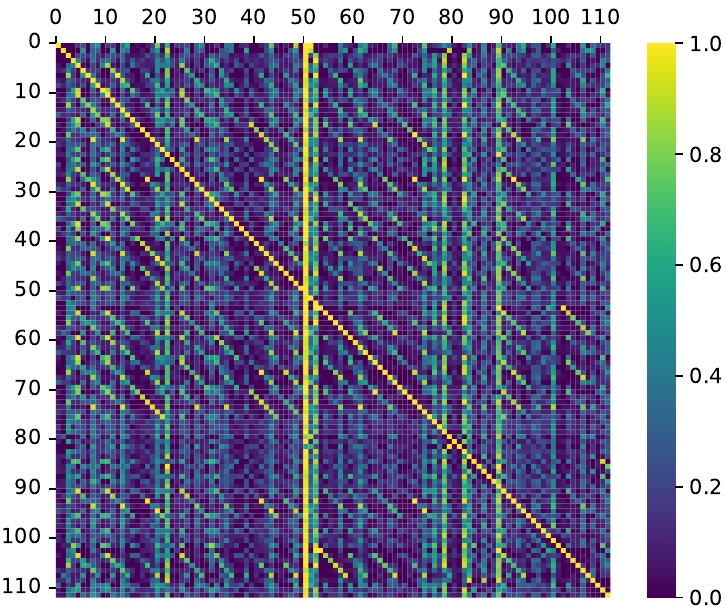}
        \caption{Ground Truth}
        \label{fig:sub1_correlation_112}
    \end{subfigure}
    \hfill
    \begin{subfigure}{0.49\textwidth}
        \centering
        \includegraphics[width=\textwidth]{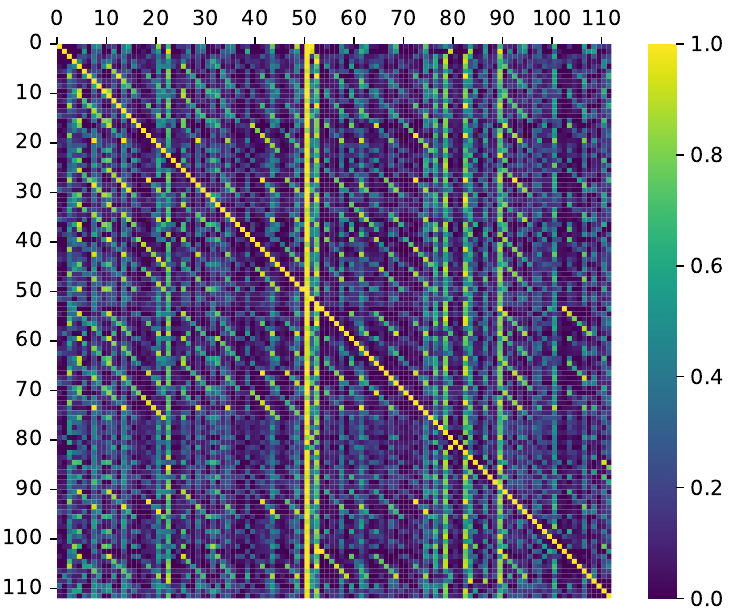}
        \caption{Prediction}
        \label{fig:sub2_correlation_112}
    \end{subfigure}
    \caption{Class-agnostic conditional probability among concepts heatmap ($p(c_{k}=1 \mid c_{k^{\prime}}=1)$) for ECBM's ground truth and prediction derived from Proposition~\ref{prop:conditionalconcepts}.}
    \label{fig:correlation_112}
\end{figure}

\begin{table}[H]
\centering
\caption{$20$ concepts shown in \figref{fig:concept_prob}.}
\label{tab:concept_name}
\begin{tabular}{c|c}
\toprule
Index & Concept Name                             \\ \midrule
1     & has\_bill\_shape::curved\_(up\_or\_down) \\ \midrule
2     & has\_bill\_shape::dagger                 \\ \midrule
3     & has\_bill\_shape::hooked                 \\ \midrule
4     & has\_bill\_shape::needle                 \\ \midrule
5     & has\_bill\_shape::hooked\_seabird        \\ \midrule
6     & has\_bill\_shape::spatulate              \\ \midrule
7     & has\_bill\_shape::all-purpose            \\ \midrule
8     & has\_bill\_shape::cone                   \\ \midrule
9     & has\_bill\_shape::specialized            \\ \midrule
10    & has\_wing\_color::blue                   \\ \midrule
11    & has\_wing\_color::brown                  \\ \midrule
12    & has\_wing\_color::iridescent             \\ \midrule
13    & has\_wing\_color::purple                 \\ \midrule
14    & has\_wing\_color::rufous                 \\ \midrule
15    & has\_wing\_color::grey                   \\ \midrule
16    & has\_wing\_color::yellow                 \\ \midrule
17    & has\_wing\_color::olive                  \\ \midrule
18    & has\_wing\_color::green                  \\ \midrule
19    & has\_wing\_color::pink                   \\ \midrule
20    & has\_wing\_color::orange                 \\ \bottomrule
\end{tabular}
\end{table}

We further provide two baseline results (CBM and CEM) on the complete 112 concepts heatmaps in \figref{fig:joint_class_112_baseline}, \figref{fig:norm_112_baseline} and \figref{fig:correlation_112_baseline}, respectively.

\begin{figure}[ht]
    \centering
    \begin{subfigure}{0.49\textwidth}
        \centering
        \includegraphics[width=\textwidth]{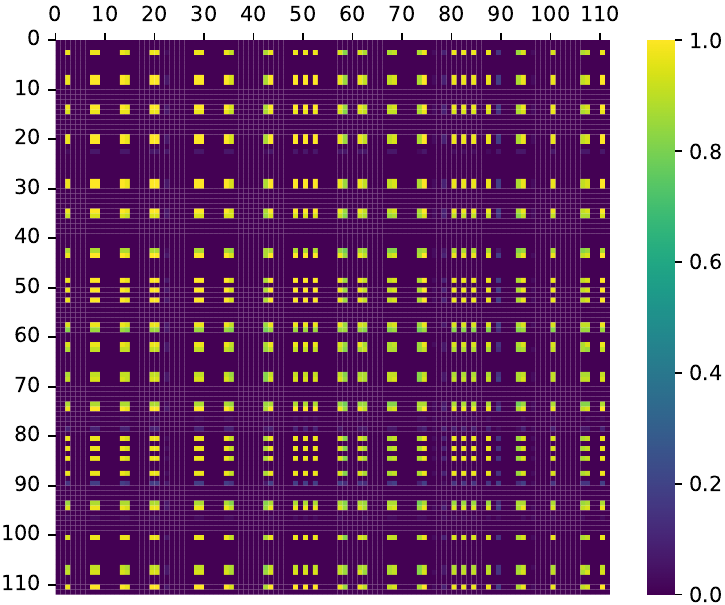}
        \caption{CBM's Prediction}
        \label{fig:sub1_joint_class_112_cbm}
    \end{subfigure}
    \hfill
    \begin{subfigure}{0.49\textwidth}
        \centering
        \includegraphics[width=\textwidth]{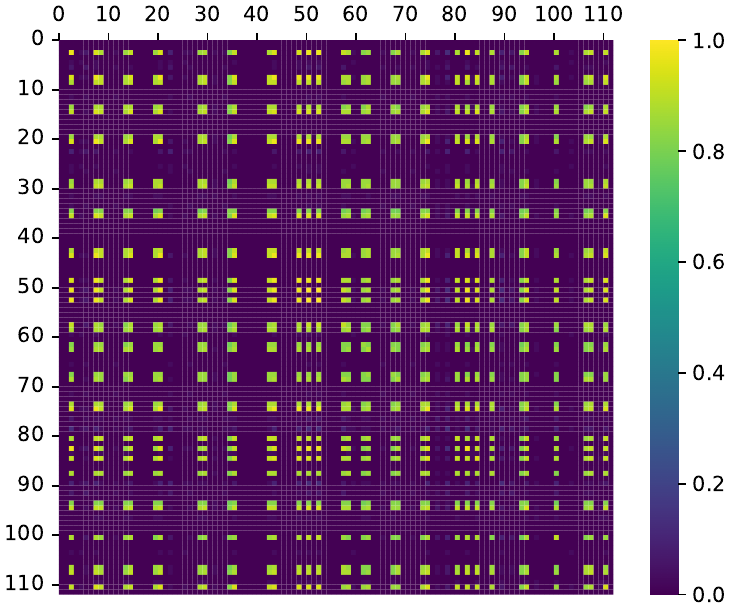}
        \caption{CEM's Prediction}
        \label{fig:sub2_joint_class_112_cem}
    \end{subfigure}
    \caption{Joint class-specific of concepts importance heatmap  ($p(c_{k^{\prime}}=1, c_k=1 \mid \vy)$) for CBM's and CEM's predictions.}
    \label{fig:joint_class_112_baseline}
\end{figure}

\begin{figure}[htb]
    \centering
    \begin{subfigure}{0.49\textwidth}
        \centering
        \includegraphics[width=\textwidth]{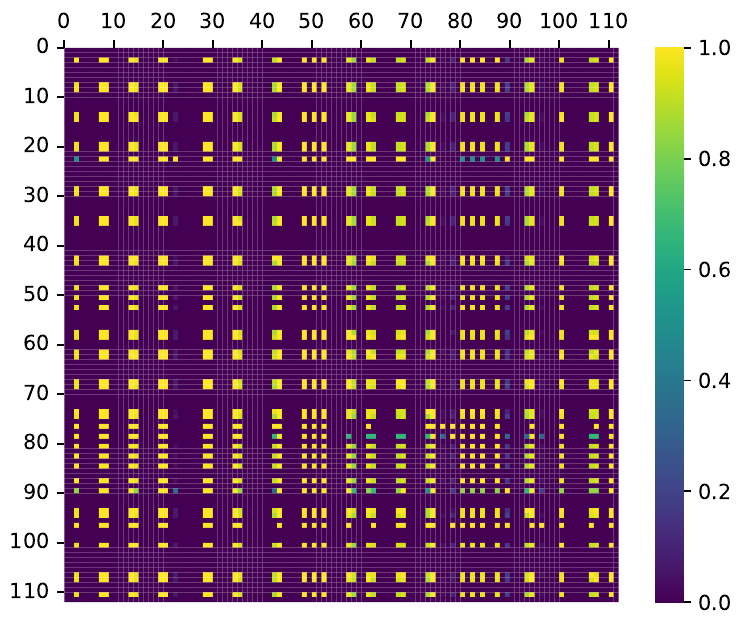}
        \caption{CBM's Prediction}
        \label{fig:sub1_norm_112_cbm}
    \end{subfigure}
    \hfill
    \begin{subfigure}{0.49\textwidth}
        \centering
        \includegraphics[width=\textwidth]{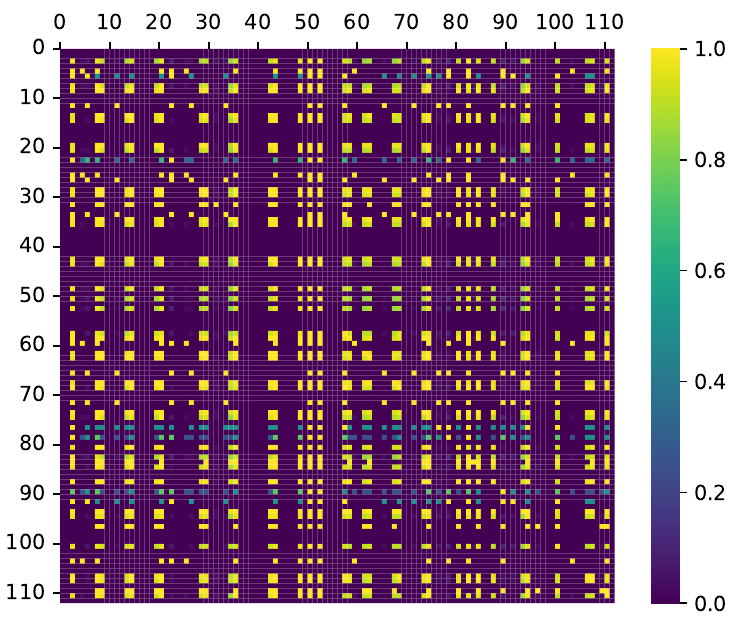}
        \caption{CEM's Prediction}
        \label{fig:sub2_norm_112_cem}
    \end{subfigure}
    \caption{Class-specific conditional probability among concepts heatmap $p(c_k=1 \mid c_{k^{\prime}}=1, \vy)$ for CBM's and CEM's predictions.}
    \label{fig:norm_112_baseline}
\end{figure}

\begin{figure}[htb]
    \centering
    \begin{subfigure}{0.49\textwidth}
        \centering
        \includegraphics[width=\textwidth]{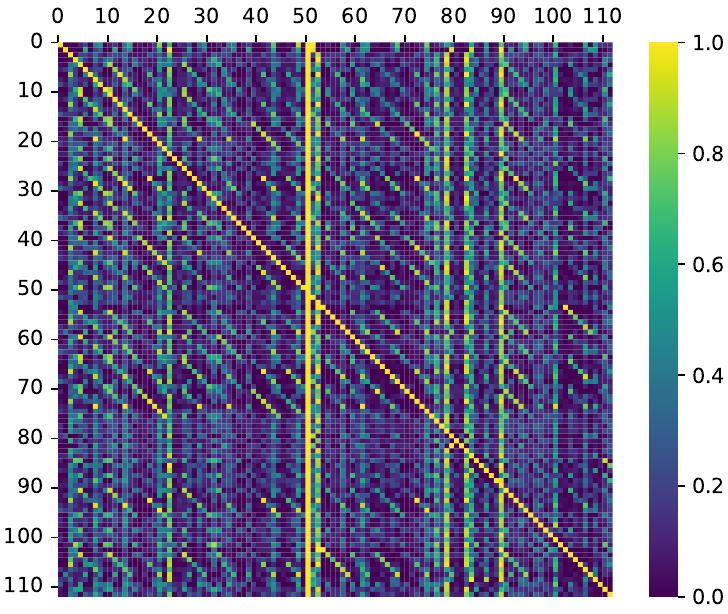}
        \caption{CBM's Prediction}
        \label{fig:sub1_correlation_112_cbm}
    \end{subfigure}
    \hfill
    \begin{subfigure}{0.49\textwidth}
        \centering
        \includegraphics[width=\textwidth]{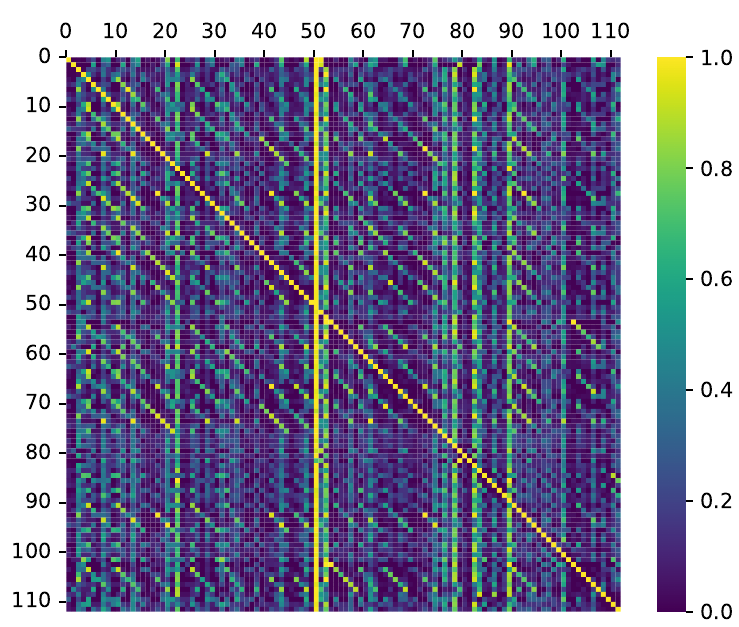}
        \caption{CEM's Prediction}
        \label{fig:sub2_correlation_112_cem}
    \end{subfigure}
    \caption{Class-agnostic conditional probability among concepts heatmap ($p(c_{k}=1 \mid c_{k^{\prime}}=1)$) for CBM's and CEM's predictions.}
    \label{fig:correlation_112_baseline}
\end{figure}

\subsection{Ablation Study}
\label{sec_app:abl}
{\tabref{tab:ecbm_single} shows the performance of single-branch ECBMs and the black-box model. \tabref{tab:hyper_tuning} shows the sensitivity of ECBM to its hyperparameters.}

\begin{table}[ht]
\centering
{
\caption{Ablation study on single-branch ECBMs and the black-box model.}
\label{tab:ecbm_single}
\vskip -0.3cm
\resizebox{\linewidth}{!}{
\begin{tabular}{c|ccc|ccc|ccc}
\toprule
\diagbox{Model}{Data}                                                                & \multicolumn{3}{c|}{CUB}                                                    & \multicolumn{3}{c|}{CelebA}                                                 & \multicolumn{3}{c}{AWA2}                                                   \\ 
\midrule
Metric & \multicolumn{1}{c|}{Concept} & \multicolumn{1}{c|}{\begin{tabular}[c]{@{}c@{}}Overall \\ Concept\end{tabular}} & Class & \multicolumn{1}{c|}{Concept} & \multicolumn{1}{c|}{\begin{tabular}[c]{@{}c@{}}Overall \\ Concept\end{tabular}} & Class & \multicolumn{1}{c|}{Concept} & \multicolumn{1}{c|}{\begin{tabular}[c]{@{}c@{}}Overall \\ Concept\end{tabular}} & Class \\ 
\midrule
ECBM ($\vx-\vy$ only)   & -       & - &   0.825    & -        & -      &    0.265   &    -     &    -   &  0.909      \\ 
ECBM ($\vx-\vc-\vy$ only) & 0.968      & 0.680        &   0.726    & 0.870        & 0.464                &0.175       & 0.979        & 0.864                &0.905       \\ 
\textbf{ECBM}    & 0.973        & 0.713    & 0.812       & 0.876        & 0.478   & 0.343      & 0.979       & 0.854             & 0.912       \\  
\midrule
CBM  & 0.964        & 0.364    &   0.759    & 0.837        & 0.381                &0.246       & 0.979        & 0.803                &0.907      \\ 
CEM  & 0.965      & 0.396        &   0.796    & 0.867        & 0.457                &0.330       & 0.978        & 0.796               &0.908 \\ 
\midrule
Black-box    & -        & -    & 0.826       & -        & -   & 0.291      & -       & -             & 0.929       \\ 
\bottomrule
\end{tabular}
}}
\vskip -0.3cm
\end{table}

\begin{table}[ht]
\centering
{
\caption{Hyperparameter Sensitivity Analysis.}
\label{tab:hyper_tuning}
\vskip -0.3cm
\resizebox{\linewidth}{!}{
\begin{tabular}{c|ccc|ccc|ccc}
\toprule
\diagbox{Hyperparameter}{Data}                                                                & \multicolumn{3}{c|}{CUB}                                                    & \multicolumn{3}{c|}{CelebA}                                                 & \multicolumn{3}{c}{AWA2}                                                   \\ 
\midrule
Metric & \multicolumn{1}{c|}{Concept} & \multicolumn{1}{c|}{\begin{tabular}[c]{@{}c@{}}Overall \\ Concept\end{tabular}} & Class & \multicolumn{1}{c|}{Concept} & \multicolumn{1}{c|}{\begin{tabular}[c]{@{}c@{}}Overall \\ Concept\end{tabular}} & Class & \multicolumn{1}{c|}{Concept} & \multicolumn{1}{c|}{\begin{tabular}[c]{@{}c@{}}Overall \\ Concept\end{tabular}} & Class \\ 
\midrule
$\lambda_{l} = 0.01$   & 0.971       & 0.679 &   0.756    & 0.872        & 0.456     &    0.166   &    0.979     &    0.854   &  0.907      \\ 
$\lambda_{l} = 0.1$ & 0.971      & 0.680        &   0.795    & 0.872        & 0.456                &0.322       & 0.979        & 0.854                &0.908       \\ 
$\lambda_{l} = 1$ & \textbf{0.973}      & \textbf{0.713}    & \textbf{0.812}       & \textbf{0.876}        &\textbf{0.478}  & \textbf{0.343}    & \textbf{0.979}   & \textbf{0.854}           & \textbf{0.912}   \\
$\lambda_{l} = 2$    & 0.971        & 0.679    & 0.808       & 0.872        & 0.455   & 0.327      & 0.979       & 0.854             & 0.911       \\  
$\lambda_{l} = 3$    & 0.971        & 0.679    & 0.808       & 0.872        & 0.455   & 0.326      & 0.979       & 0.854             & 0.911       \\  
$\lambda_{l} = 4$    & 0.971        & 0.679    & 0.806       & 0.872        & 0.455   & 0.327      & 0.979       & 0.854             & 0.911       \\  
\midrule
$\lambda_{c} = 0.01$   & 0.971       & 0.679 &   0.799    & 0.872        & 0.456     &    0.329   &    0.979     &    0.854   &  0.912      \\ 
$\lambda_{c} = 0.1$ & 0.971      & 0.679        &   0.799    & 0.872        & 0.456                &0.166       & 0.979        & 0.864                &0.912       \\ 
$\lambda_{c} = 1$ & \textbf{0.973}      & \textbf{0.713}    & \textbf{0.812}       & \textbf{0.876}        &\textbf{0.478}  & \textbf{0.343}    & \textbf{0.979}   & \textbf{0.854}           & \textbf{0.912}   \\
$\lambda_{c} = 2$    & 0.971        & 0.679    & 0.798       & 0.872        & 0.455   & 0.329      & 0.979       & 0.854             & 0.912       \\  
$\lambda_{c} = 3$    & 0.971        & 0.679    & 0.798       & 0.872        & 0.455   & 0.329      & 0.979       & 0.854             & 0.912       \\  
$\lambda_{c} = 4$    & 0.971        & 0.679    & 0.798       & 0.872        & 0.455   & 0.329      & 0.979       & 0.854             & 0.912       \\  
\midrule
$\lambda_{g} = 0.0001$   & 0.971       & 0.679 &   0.805    & 0.872       & 0.455      &    0.326   &    0.979     &    0.854   &  0.911      \\ 
$\lambda_{g} = 0.001$ & 0.971      & 0.679        &   0.805    & 0.872        & 0.455                &0.326       & 0.979        & 0.854                &0.911       \\   
$\lambda_{g} = 0.01$ & \textbf{0.973}      & \textbf{0.713}    & \textbf{0.812}       & \textbf{0.876}        &\textbf{0.478}  & \textbf{0.343}    & \textbf{0.979}   & \textbf{0.854}           & \textbf{0.912}   \\
$\lambda_{g} = 0.1$    & 0.971        & 0.680    & 0.795       & 0.872        & 0.456   & 0.322      & 0.979       & 0.854             & 0.909       \\  
$\lambda_{g} = 1$    & 0.971        & 0.680    & 0.756       & 0.872        & 0.456   & 0.166      & 0.979       & 0.854             & 0.907       \\  
\bottomrule
\end{tabular}
}}
\vskip -0.3cm
\end{table}

\subsection{Robustness}
\label{sec_app:robust}
{
We believe our ECBMs do potentially enjoy stronger robustness to adversarial attacks compared to existing CBM variants. Specifically, our ECBMs are designed to understand the relationships between different concepts, as well as the relationships between concepts and labels. As a result, during inference, ECBMs can leverage these relationships to automatically correct concepts that may be influenced by adversarial attacks. Our preliminary results suggest that our ECBM can potentially improve the robustness against adversarial attacks compared to existing CBM variants. 

\begin{table}[ht]
\centering
{
\caption{ Accuracy on TravelingBirds~\citep{koh2020concept} (background shift). We report the CBM results using Table 3 of \citep{koh2020concept}.}
\label{tab:background}
\vskip -0.3cm
\begin{tabular}{c|ccc}
\toprule
Model & \multicolumn{1}{c|}{Concept} & \multicolumn{1}{c|}{\begin{tabular}[c]{@{}c@{}}Overall \\ Concept\end{tabular}} & Class \\ 
\midrule
Standard  & -       & - &   0.373  \\
Joint (CBM)   & 0.931       & - &   0.518  \\
Sequential (CBM)   & 0.928       & - &   0.504  \\
Independent (CBM)   & 0.928      & - &   0.518  \\
\textbf{ECBM}  & \textbf{0.945}     & \textbf{0.416} &   \textbf{0.584}  \\
\bottomrule
\end{tabular}
}
\vskip -0.3cm
\end{table}

Furthermore, we conducted additional experiments on the TravelingBirds dataset following the robustness experiments of CBM~\citep{koh2020concept} concerning background shifts. The results (\tabref{tab:background}) reveal that our ECBM significantly outperforms CBMs in this regard. These findings underscore our model's superior robustness to spurious correlations.
}

\subsection{The Notion of Complex Conditional Relations}
\label{sec_app:complex}
{ Previous concept-based methods do not allow one to understand ``complex" conditional dependencies (as mentioned in the abstract), such as $p(\vc|\vy)$, $p({c}_{k}|\vy,c_{k^{\prime}})$, and $p(\vc_{-k}, \vy|\vx,c_{k})$. Post-hoc CBMs and vanilla CBMs, with their interpretative linear layers, provide a way to understand concept-label relationships. In fact, in PCBMs, the weights can indicate each concept's importance for a given label ($p(c_k|\vy)$). However, these existing CBM variants cannot provide more complex conditional dependencies such as $p({c}_{k}|\vy,c_{k^{\prime}})$ and $p(\vc_{-k}, \vy|\vx,c_{k})$. In contrast, our model, which relies on the Energy-Based Model (EBM) structure, can naturally provide such comprehensive conditional interpretations. }

\subsection{Intervention}
\label{sec_app:intervention}
{ \paragraph{Leakage in Models.} We conducted experiments that address the leakage problem as described in ~\citep{havasi2022addressing}. When fewer concepts were given in the training process, the model should not learn as good as the one with all concepts known because of less conceptual information. If the model holds strong performance regardless of concept quantities, it is more likely to suffer from information leakage, i.e., learning concept-irrelevant features directly from image input. This will potentially detriment the accountability of the model interpretation. During training, we provide different quantities of concept groups available for model to learn, and then test the models based on these available concepts. \figref{fig:leakage} shows the results. We found that our model performs gradually better when more concept groups are given during training process, instead of intervention-invariant high performances. This indicates that our model suffers less from information leakage, hence providing \emph{more reliable} interpretations.}

\begin{figure}[ht]
    \centering
    \includegraphics[width=0.6\textwidth]{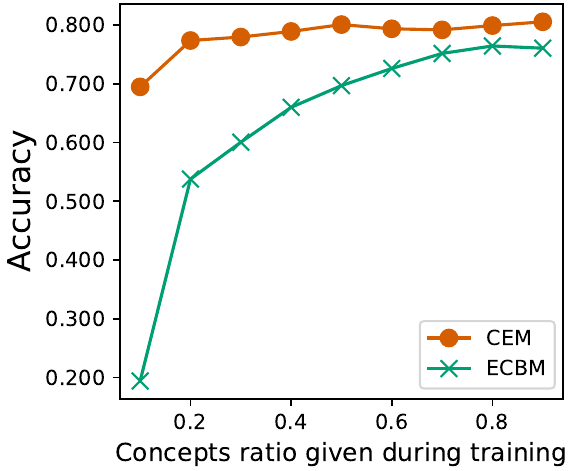}
    \vskip -0.3cm
    \caption{{The predictive performance of CEM and hard ECBM on the CUB dataset. The horizontal axis denotes the ratio of concept groups given during the training process, and the vertical axis denotes the ``Class Accuracy''.}}
    \label{fig:leakage}
    \vskip -0.5cm
\end{figure}

{ \paragraph{Individual Intervention on the CUB dataset.} We intervened our model based on individual concepts of the CUB dataset. 
\figref{fig:cubindividual} shows the intervention results.}

\begin{figure}[ht]
    \centering
    \includegraphics[width=\textwidth]{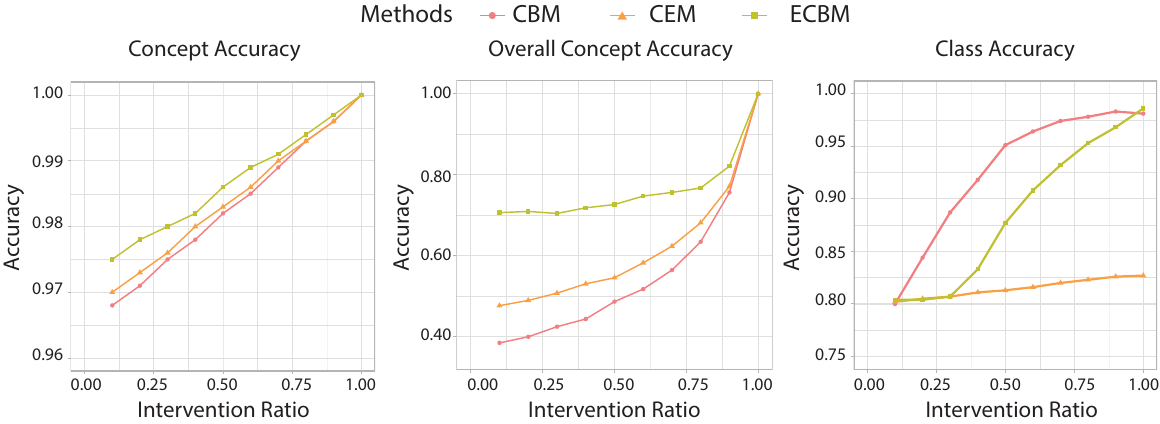}
    \vskip -0.3cm
    \caption{{Individual intervention on the CUB dataset. The horizontal axis denotes the ratio of the concept group's ground truth given during the inference process.}}
    \label{fig:cubindividual}
    \vskip -0.5cm
\end{figure}

{ 
We have several observations based on these intervention results.

\begin{enumerate}[nosep,leftmargin=28pt]
    \item Our ECBM underperforms CEM with RandInt in terms of class accuracy. This is expected since CEM is a strong, state-of-the-art baseline with RandInt particularly to improve intervention accuracy. In contrast, our ECBM did not use RandInt. This demonstrates the effectiveness of the RandInt technique that the CEM authors proposed.

    \item Even without RandInt, our ECBM can outperform both CBM and CEM in terms of ``concept accuracy'' and ``overall concept accuracy'', demonstrating the effective of our ECBM when it comes to concept prediction and interpretation. 

    \item We would like to reiterate that ECBM's main focus is not to improve class accuracy, but to provide complex conditional interpretation (conditional dependencies) such as $p(\vc|\vy)$, $p({c}_{k}|\vy,c_{k^{\prime}})$, and $p(\vc_{-k}, \vy|\vx,c_{k})$. Therefore, our ECBM is actually complementary to seminal works such as CBM and CEM which focus more on class accuracy and intervention accuracy. 

\end{enumerate}
}



\end{document}